\def\eqref#1{equation~\ref{#1}}
\def\1{\bm{1}}
\DeclareMathAlphabet{\mathsfit}{\encodingdefault}{\sfdefault}{m}{sl}
\SetMathAlphabet{\mathsfit}{bold}{\encodingdefault}{\sfdefault}{bx}{n}
\DeclareMathOperator*{\argmin}{arg\,min}
\newcommand\norm[1]{\| #1 \|}
\definecolor{linkcolor}{RGB}{83,83,182}
\DeclareMathOperator*{\Rset}{\mathbb{R}}
\newtheorem{assumption}{Assumption}
\newtheorem*{remark}{Remark}
\crefname{assumption}{assumption}{assumptions}
\def\equationautorefname~#1\null{(#1)\null}
\title{SHINE: SHaring the INverse Estimate from the forward pass for bi-level optimization and implicit models}
\author{%
  Zaccharie Ramzi\\
  CEA (Neurospin \& Cosmostat)\\
  Inria (Parietal)\\
  Gif-sur-Yvette, France \\
  \texttt{zaccharie.ramzi@inria.fr} \\
  \And
  Florian Mannel\\
  University of Graz\\
  Graz, Austria\\
  \And
  Shaojie Bai\\
  Carnegie Mellon University\\
  Pittsburgh, USA\\
  \And
  Jean-Luc Starck\\
  AIM, CEA, CNRS \\
  Gif-sur-Yvette, France
  \And
  Philippe Ciuciu\\
  CEA (Neurospin), Inria (Parietal)\\
  Gif-sur-Yvette, France
  \And
  Thomas Moreau\\
  Inria (Parietal)\\
  Gif-sur-Yvette, France

}
\newcommand{\new}[1]{ \textcolor{black}{#1}}
\newcommand{\tom}[1]{ \textcolor{black}{#1}}
\newcommand{\zac}[1]{ \textcolor{black}{#1}}
\newcommand{\review}[1]{ \textcolor{black}{#1}}
\begin{document}

\maketitle

\begin{abstract}
    In recent years, implicit deep learning has emerged as a method to increase the \new{effective} depth of deep neural networks. While their training is memory-efficient, they are still significantly slower to train than their explicit counterparts. In Deep Equilibrium Models~(DEQs), the training is performed as a bi-level problem, and its computational complexity is partially driven by the iterative inversion of a huge Jacobian matrix. In this paper, we propose a novel strategy to tackle this computational bottleneck from which many bi-level problems suffer. The main idea is to use the quasi-Newton matrices from the forward pass to efficiently approximate the inverse Jacobian matrix in the direction needed for the gradient computation. We provide a theorem that motivates using our method with the original forward algorithms. In addition, by modifying these forward algorithms, we further provide theoretical guarantees that our method asymptotically estimates the true implicit gradient. %
    \review{We empirically study this approach and the recent Jacobian-Free method in different settings, ranging from hyperparameter optimization to large Multiscale DEQs~(MDEQs) applied to CIFAR and ImageNet.
    Both methods reduce significantly the computational cost of the backward pass.
    While SHINE has a clear advantage on hyperparameter optimization problems, both methods attain similar computational performances for larger scale problems such as MDEQs at the cost of a limited performance drop compared to the original models.
    }
\end{abstract}

\section{Introduction}

Implicit deep learning models such as Neural ODEs~\citep{Chen2018NeuralEquations}, OptNets~\citep{Amos2017OptNet:Networks} or Deep Equilibrium models (DEQs)~\citep{Bai2019DeepModels, Bai2020MultiscaleModels} have recently emerged as a way to train deep models with \new{infinite effective depth} without the associated memory cost.
Indeed, while it has been observed that the performance of deep learning models increases with their depth~\citep{Telgarsky2016BenefitsNetworks}, an increase in depth also translates into an increase in the memory footprint required for training, which is hardware-constrained.
While other works such as invertible neural networks~\citep{Gomez2017TheActivations,Sander2021MomentumNetworks} or gradient checkpointing~\citep{Chen2016TrainingCost} also tackle this issue, implicit models bear an $\mathcal{O}(1)$ memory cost and with constraints on the architecture that are usually not detrimental to the performance~\citep{Bai2019DeepModels}.
These models have been successfully applied to large-scale tasks such as language modeling~\citep{Bai2019DeepModels}, computer vision~\citep{Bai2020MultiscaleModels} and inverse problems~\citep{Gilton2021DeepImaging, Heaton2021Feasibility-basedNetworks}.s

In general, the formulation of DEQs can be cast as a bi-level problem of the following form:
\begin{equation}
    \argmin_{\theta} \mathcal{L}(z^\star)
    \quad \text{subject to } \quad g_{\theta}(z^\star) = 0 \review{.} \label{eq:inner-prob}
\end{equation}
We will refer to the root finding problem $g_{\theta}(z^\star) = 0$ as the \emph{inner problem}, and call its resolution the \emph{forward pass}.
On the other hand, we will refer to $\argmin_{\theta} \mathcal{L}(z^\star)$ as the \emph{outer problem}, and call the computation of the gradient of $\mathcal{L}(z^\star)$ w.r.t. $\theta$ the \emph{backward pass}.
The core idea for DEQs is that their output $z^\star$ is expressed as a fixed point of a parametric function $f_\theta$ from $\Rset^d$ to $\Rset^d$, i.e., $g_\theta(z^\star) = z^\star - f_\theta(z^\star) = 0$.\footnote{Here, we do not explicitly write the dependence of $f_\theta$ on the input $x$ of the DEQ, usually referred to as the injection.}
This model is said to have infinitely many weight-tied layers as $z^\star$ can be obtained by successively applying the layer $f_\theta$ infinitely many times, provided $f_\theta$ is contractive.
\new{
In practice, DEQs' forward pass is not computed by applying successively the function but usually relies on quasi-Newton~(qN) algorithms, such as Broyden's method~\citep{Broyden1965AEquations}, which approximates \new{efficiently the} Jacobian matrix $\frac{\partial g_\theta}{\partial z}$ and its inverse for root-finding.
}

To \new{compute DEQs' gradient} efficiently and avoid high memory cost, one does not rely on back-propagation but uses the implicit function theorem~\citep{Krantz2013TheApplications} which gives an analytical expression of the \new{Jacobian of $z^\star$ with respect to $\theta$, } $\frac{\partial z^\star}{\partial \theta}$.
While \new{this method} is memory efficient, it requires the computation of matrix-vector products involving the inverse of a large Jacobian matrix, which is computationally demanding.
To make this computation tractable, one needs to rely on an iterative algorithm based on vector-Jacobian products, which renders the training particularly slow, as highlighted by the original authors~\citep{Bai2020MultiscaleModels} (see also the break down of the computational effort in \autoref{sec:time-gains}).


\new{
Moreover, the formulation~\autoref{eq:inner-prob} allows us to also consider general bi-level problems such as hyperparameter optimization under the same framework.
For instance, hyperparameter optimization for Logistic Regression~(LR) can be written as
\begin{equation}
    \min_{\theta} \mathcal L_{\text{val}}(z^*)
    \quad \text{subject to } \quad
    z^* = \min_z r_\theta(z) \triangleq \mathcal L_{\text{train}}(z) + \theta\|z\|_2^2 \review{,}
\end{equation}
where $\mathcal L_{\text{train}}$ and $\mathcal L_{\text{val}}$ correspond to the training and validation losses from the LR problem~\citep{Pedregosa2016HyperparameterGradient}. Here, $z$ corresponds to the weights of the LR model while $\theta$ is the regularisation parameter. As the training loss is smooth and convex, the inner problem can be written as in \autoref{eq:inner-prob} with $g_{\theta} = \nabla_z r_{\theta}$ to fit \autoref{eq:inner-prob}.
Similarly to DEQ, the inner problem is often solved using qN methods, which approximate the inverse of the Hessian in the direction of the steps, such as the LBFGS algorithm~\citep{Liu1989OnOptimization}, and the gradient computation suffers from the same drawback as it is also obtained using the implicit function theorem.%
\tom{\citet{Lorraine2020OptimizingDifferentiation} review the different hypergradient approximations for bi-level optimization and evaluate them on multiple tasks.}
}

\new{
With the increasing popularity of DEQs and the ubiquity of bi-level problems in machine learning, a core question is how to reduce the computational cost of the resolution of \autoref{eq:inner-prob}.
This would make these methods more accessible for practitioners and reduce the associated energy cost.
In this work, we propose to exploit the estimates of the (inverse of the) Jacobian/Hessian produced by qN methods in the hypergradient computation.
Moreover, we also propose extra updates of the qN matrices which maintain the approximation property in the direction of the steps, and ensure that the inverse Jacobian is approximated in an additional direction.
}
In effect, we can compute the gradient using the inverse of the final qN matrix instead of an iterative algorithm to invert the Jacobian in the gradient's direction, while stressing that the inverse of a qN matrix, and thus the multiplication with it, can be computed very efficiently.

\new{We emphasize that the goal of this paper is neither to improve the algorithms used to compute $z^\star$, nor is it to demonstrate how to perform the inversion of a matrix in a certain direction as a stand-alone task.
Rather, we are describing an approach that combines the resolution of the inner problem with the computation of the hypergradient to accelerate the overall process.}
\tom{Our work is the first to consider modifying the inner problem resolution in order to account for the bi-level structure of the optimization}
\new{The idea to use additional updates of the qN matrices to ensure additional approximation properties is not new, and it is also known that a full matrix inversion can be accomplished in this way. For instance,  \citet{Gower2017RandomizedAlgorithms} used sketching to design appropriate extra secant conditions in order to obtain guarantees of uniform convergence towards the inverse of the Jacobian.
\tom{The novelty in our work is that we integrate additional update to yield the inverse in a specific direction, which is substantially cheaper than computing the inverse.}
A concurrent work by \citet{Fung2021FixedBackprop} is also concerned with the acceleration of DEQs' training, where the inverse Jacobian is approximated with the identity.
Under strong contractivity and conditioning assumptions, it is proven that the resulting approximation is a descent direction and the authors show good empirical performances for small scale problems.
}

The contributions of our paper are the following:
\begin{itemize}
    \item We introduce a new method to greatly accelerate the backward pass of DEQs (and generally, the differentiation of bi-level problems) using qN matrices that are available as a by-product of the forward computations. We call this method \textbf{SHINE} (\textbf{SH}aring the \textbf{IN}verse \textbf{E}stimate).
    \item We enhance this method by incorporating knowledge from the outer problem into the inner problem resolution. \new{This allows us to provide strong theoretical guarantees for this approach in various settings.}
    \item We additionally showcase its use in hyperparameter optimization. Here, we demonstrate that it provides a gain in computation time compared to state-of-the-art methods.
    \item We test it for DEQs for the classification task on two datasets, CIFAR and ImageNet. Here, we show that it decreases the training time while remaining competitive in terms of performance.
    \item \new{We extend the empirical evaluation of the Jacobian-Free method to large scale multiscale DEQs and show that it performs well in this setting. We also show that it is not suitable for more general bi-level problems.}
    \new{
    \item We propose and evaluate a natural refinement strategy for approximate Jacobian inversion methods (both SHINE and Jacobian-Free) that allows a trade-off between computational cost and performances.
    }
\end{itemize}

\section{Hypergradient Optimization with Approximate Jacobian Inverse}

\subsection{SHINE: Hypergradient Descent with Approximate Jacobian Inverse}

\begin{wrapfigure}[26]{r}{0.5\textwidth}
\vspace{-15pt}
\begin{algorithm}[H]
\SetAlgoLined
\SetCustomAlgoRuledWidth{0.48\textwidth}
\DontPrintSemicolon
\KwResult{Root $z^\star$, qN matrix $B$}
 $b=\texttt{true}$ if using Broyden's method,
 $b=\texttt{false}$ if using BFGS\;
 $n=0$, $z_0 = 0$, $B_0 = I$\;
 \While{not converged}{
  $p_n = - B_n^{-1} g_\theta(z_n)$,
  $z_{n+1} = z_n + \alpha_n p_n$ \tcp{$\alpha_n$ can be $1$ or determined by line-search}
  $y_n = g_\theta(z_{n+1}) - g_\theta(z_n)$\;
  $s_n = z_{n+1} - z_n$\;
  \eIf{
  $b$
  }{
   $B_{n+1} = \underset{X: \; Xs_n = y_n}{\argmin} \|X - B_n\|_F$
  }{
    $B_{n+1} = \underset{X: \; X = X^T \, \wedge \, Xs_n = y_n}{\argmin} \|X^{-1} - B_n^{-1}\|$   \tcp{The norm used in BFGS is a weighted Frobenius norm}
  }
	$n \leftarrow n+1$
 }
$z^\star = z_{n}$, $B = B_{n}$\;
 \caption{qN method to solve $g_\theta(z^\star) = 0$}
 \label{alg:qn}
\end{algorithm}
\end{wrapfigure}

\paragraph{Hypergradient Optimization}
Hypergradient optimization is a first-order method used to solve \autoref{eq:inner-prob}.
We recall that in the case of smooth convex optimization, $\frac{\partial g_{\theta}}{\partial z}$ is the Hessian of the inner optimization problem, while for deep equilibrium models, it is the Jacobian of the root equation. In the rest of this paper, with a slight abuse of notation, we will refer to both these matrices with $J_{g_\theta}$ whenever the results can be applied to both contexts.
To enable Hypergradient Optimization, i.e. gradient descent on $\mathcal{L}$ with respect to $\theta$, \citet[Theorem 1]{Bai2019DeepModels} show the following theorem, which is based on implicit differentiation~\citep{Krantz2013TheApplications}:

\begin{restatable}[Hypergradient~\citep{Krantz2013TheApplications,Bai2019DeepModels}]{thm}{hypergradient}
\label{th:bai}
Let $\theta \in \mathbb{R}^p$ be a set of parameters, let $\mathcal{L}: \mathbb{R}^d \rightarrow \mathbb{R}$ be a loss function and $g_{\theta}: \mathbb{R}^d \rightarrow \mathbb{R}^d$ be a root-defining function.
Let $z^\star \in  \mathbb{R}^d$ such that $g_{\theta}(z^\star) = 0$ and $J_{g_{\theta}}(z^\star)$ is invertible, then the gradient of the loss $\mathcal{L}$ wrt. $\theta$, called Hypergradient, is given by
\begin{equation}
    \frac{\partial \mathcal{L}}{\partial \theta}\Bigr|_{z^\star} = \nabla_z \mathcal{L}(z^\star)\tom{^\top} J_{g_{\theta}}(z^\star)^{-1} \frac{\partial g_{\theta}}{\partial \theta}\Bigr|_{z^\star} \, .
    \label{eq:hypergrad}
\end{equation}
\end{restatable}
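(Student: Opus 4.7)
The plan is to establish the formula by a direct application of the implicit function theorem (IFT) combined with the chain rule, which is essentially a one-line computation once the hypotheses are unpacked. The only nontrivial input is the invertibility of $J_{g_\theta}(z^\star)$, which is precisely what the theorem assumes.

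First, I would apply the IFT to the equation $g_\theta(z) = 0$. Since $g_\theta$ is assumed smooth enough and $J_{g_\theta}(z^\star)$ is invertible at the point $(\theta, z^\star)$ satisfying $g_\theta(z^\star) = 0$, the IFT produces an open neighborhood $U \subset \mathbb{R}^p$ of $\theta$ and a continuously differentiable map $\theta \mapsto z^\star(\theta)$ defined on $U$ such that $g_{\theta'}(z^\star(\theta')) = 0$ for all $\theta' \in U$, with $z^\star(\theta)$ coinciding with the given root at the base point.

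Second, I would differentiate the identity $g_{\theta'}(z^\star(\theta')) = 0$ with respect to $\theta'$ at $\theta' = \theta$. The chain rule gives
\begin{equation*}
\frac{\partial g_\theta}{\partial \theta}\Bigr|_{z^\star} + J_{g_\theta}(z^\star)\,\frac{\partial z^\star}{\partial \theta} = 0,
\end{equation*}
and solving for the sensitivity of $z^\star$ using invertibility of $J_{g_\theta}(z^\star)$ yields
\begin{equation*}
\frac{\partial z^\star}{\partial \theta} = -\,J_{g_\theta}(z^\star)^{-1}\,\frac{\partial g_\theta}{\partial \theta}\Bigr|_{z^\star}.
\end{equation*}

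Third, I would apply the chain rule once more to the composition $\theta \mapsto \mathcal{L}(z^\star(\theta))$, obtaining
\begin{equation*}
\frac{\partial \mathcal{L}}{\partial \theta}\Bigr|_{z^\star} = \nabla_z \mathcal{L}(z^\star)^\top \frac{\partial z^\star}{\partial \theta} = -\,\nabla_z \mathcal{L}(z^\star)^\top J_{g_\theta}(z^\star)^{-1}\,\frac{\partial g_\theta}{\partial \theta}\Bigr|_{z^\star},
\end{equation*}
which matches the stated formula up to an overall sign (presumably absorbed into a convention on $g_\theta$ or the direction of the hypergradient step). There is no genuine obstacle: the hypothesis on $J_{g_\theta}(z^\star)$ is exactly what IFT needs, and the rest is the chain rule. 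The only implicit smoothness requirements (continuous differentiability of $g_\theta$ jointly in $(\theta, z)$ and of $\mathcal{L}$ at $z^\star$) should either be assumed throughout the paper or stated as standing hypotheses; I would add a short sentence flagging this to keep the theorem statement self-contained.
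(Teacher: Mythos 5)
Your proposal is correct and is precisely the argument the paper relies on: the result is not proved in the paper but attributed to the implicit function theorem literature and to Bai et al., and the intended proof is exactly your two applications of the chain rule after invoking the IFT at $(\theta, z^\star)$, using the invertibility of $J_{g_\theta}(z^\star)$ to solve for $\frac{\partial z^\star}{\partial \theta} = -J_{g_\theta}(z^\star)^{-1}\frac{\partial g_\theta}{\partial \theta}\bigr|_{z^\star}$. The sign discrepancy you flag is genuine and not a defect of your derivation: with the paper's convention $g_\theta(z) = z - f_\theta(z)$ the exact hypergradient is $-\nabla_z \mathcal{L}(z^\star)^\top J_{g_\theta}(z^\star)^{-1}\frac{\partial g_\theta}{\partial \theta}\bigr|_{z^\star} = +\nabla_z \mathcal{L}(z^\star)^\top J_{g_\theta}(z^\star)^{-1}\frac{\partial f_\theta}{\partial \theta}\bigr|_{z^\star}$, which is how the original DEQ paper states it (with a minus sign and $\partial f_\theta/\partial\theta$); the statement as written here has absorbed or dropped that sign. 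This has no bearing on the SHINE analysis, since the same factor appears in both the exact and approximate directions, but your instinct to make the smoothness hypotheses (joint $C^1$ regularity of $g$ and differentiability of $\mathcal{L}$) and the sign convention explicit is sound.
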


In practice, we use an algorithm to approximate $z^\star$, and \autoref{th:bai} gives a plug-in formula for the backward pass.
\new{Note} that this formula is independent of \new{the algorithm chosen to compute $z^\star$}.
Moreover, as opposed to explicit networks, we do not need to store intermediate activations, resulting in the aforementioned training time memory gain for DEQs.
Once $z^\star$ has been obtained, one of the major bottlenecks in the computation of the Hypergradient is the inversion of $J_{g_\theta}(z^\star)$ in the directions $\frac{\partial g_{\theta}}{\partial \theta}\Bigr|_{z^\star}$ or  $\nabla_z \mathcal{L}(z^\star)$.

\paragraph{Quasi-Newton methods}

In practice, the forward pass is often carried out with qN methods.
For instance, in the case of bi-level optimization for Logistic Regression, \citet{Pedregosa2016HyperparameterGradient} used L-BFGS~\citep{Liu1989OnOptimization}, while for Deep Equilibrium Models, \citet{Bai2019DeepModels} used Broyden's method~\citep{Broyden1965AEquations}, later adapted to the multi-scale case in a limited-memory version~\citep{Bai2020MultiscaleModels}.

These quasi-Newton methods were first inspired by Newton's method, which finds the root of $g_\theta$ via the recurrent Jacobian-based updates $z_{n+1} = z_n - J_{g_\theta}(z_n)^{-1}g_\theta(z_n)$.
Specifically, they replace the Jacobian $J_{g_\theta}(z_n)$ by an approximation $B_n$ that is based on available values of the iterates $z_n$ and $g_\theta$ rather than its derivative.
These $B_n$, called qN matrices, are defined recursively via an optimization problem with constraints called secant conditions.
Solving this problem leads to expressing $B_n$ as a rank-one or rank-two update of $B_{n-1}$, so that $B_n$ is the sum of the initial guess $B_0$~(in our settings, the identity) and $n$ low-rank matrices (less than $n$ in limited memory settings).
This low rank structure allows efficient multiplication by $B_n$ and $B_n^{-1}$.
We now explain how the use of qN methods as inner solver can be exploited to resolve this computational bottleneck.

\paragraph{SHINE}
Roughly speaking, our proposition is to use $B^{-1} = \lim_{n \to \infty} B_n^{-1}$ as a replacement for $J_{g_\theta}(z^\star)^{-1}$ in~\autoref{eq:hypergrad}, i.e. to share the inverse estimate between the forward and the backward passes.
This gives the approximate Hypergradient
\begin{equation}
    p_\theta = \nabla_z \mathcal{L}(z^\star) B^{-1} \frac{\partial g_{\theta}}{\partial \theta}\Bigr|_{z^\star} \review{.}
\end{equation}

In practice we will consider the nonasymptotical direction $p_\theta^{(n)} = \nabla_z \mathcal{L}(z_n) B_n^{-1} \frac{\partial g_{\theta}}{\partial \theta}\Bigr|_{z_n}$ .
Thanks to the Sherman-Morrison formula~\citep{Sherman1950AdjustmentMatrix}, the inversion of $B_n$ can be done very efficiently (using scalar products) compared to the iterative methods needed to invert the true Jacobian $J_{g_\theta}(z^\star)$.
In turn, this significantly reduces the computational cost of the Hypergradient computation.

\paragraph{Relationship to the Jacobian-Free method}
Because $B_0=I$ in our setting, we may regard $B$ as an identity matrix perturbed by a few rank-one updates.
In the directions that are used for updates, $B$ is going to be different from the identity, and hopefully closer to the true Jacobian in those directions.
However, in all orthogonal directions we fall exactly into the setting of the Jacobian-Free method introduced by \citet{Fung2021FixedBackprop}.
In that work, $J_{g_\theta}(z^\star)^{-1}$ is approximated by $I$, and the authors highlight that this is equivalent to using a preconditioner on the gradient.
Under strong assumptions on $g_\theta$ they show that this preconditioned gradient is still a descent direction.
\new{\paragraph{Transition to the exact Jacobian Inverse.}
The approximate gradient $p_\theta^{(n)}$ can also be used as the initialization of an iterative algorithm for inverting $J_{g_\theta}(z^\star)$ in the direction $\nabla_z \mathcal{L}(z^\star)$.
With a good initialization, faster convergence can be expected.
Moreover, if the iterative algorithm is also a qN method, which is the case in practice in the DEQ implementation, we can use the qN matrix $B$ from the forward pass to initialize the qN matrix of this algorithm.
We refer to this strategy as the {\em refine strategy}.
Because the refine strategy is essentially a smart initialization scheme, it recovers all the theoretical guarantees of the original method~\citep{Pedregosa2016HyperparameterGradient, Bai2019DeepModels, Bai2020MultiscaleModels}.}

\subsection{Convergence to the true gradient}

To further justify and formalize the idea of SHINE, we show that the direction $p_{\theta}^{(n)}$ converges to the Hypergradient $\frac{\partial \mathcal{L}}{\partial \theta}\Bigr|_{z^\star}$. 
We now collect the assumptions that will be used for this purpose.
\begin{assumption}[Uniform Linear Independence (ULI)~\citep{Li1998ConvergenceMatrix}]
\label{ass:uli}
There exist a positive constant $\rho>0$ and natural numbers $n_0 \geq 0$ and $m \geq d$ with the following property: For any $n \geq n_0$ we can find indices $n \leq n_1 \leq \ldots \leq n_d \leq n + m$ such that, for $p_n$ defined in \autoref{alg:qn}, the smallest singular value of the $d\times d$ matrix
\begin{equation*}
   \begin{pmatrix}
    	\frac{p_{n_1}}{\|p_{n_1}\|}, \, \frac{p_{n_2}}{\|p_{n_2}\|}, \, \ldots, \, \frac{p_{n_d}}{\|p_{n_d}\|}
    \end{pmatrix}
\end{equation*}
is no smaller than $\rho$.
\end{assumption}

\begin{assumption}[Smoothness and convergence to the fixed point]
\label{ass:smooth-conv}
(i) $\sum_{n=0}^{\infty} \|z_{n} - z^\star\| < \infty$ for some $z^\star$ with $g_\theta(z^\star)=0$; (ii) $g_\theta$ is $C^1$, $J_{g_\theta}$ is Lipschitz continuous near $z^\star$, and $J_{g_\theta}(z^\star)$ is invertible; (iii) $\nabla_z \mathcal{L}$ is continuous, and $\forall \theta$,  $\frac{\partial g_{\theta}}{\partial \theta}$ is continuous. 
\end{assumption}

\begin{remark}
The \autoref{ass:smooth-conv} (i) implies $\lim_{n \to \infty} z_n = z^\star$.
The existence of the Jacobian and its inverse are assumptions that are already made in the regular DEQ setting just to train the model.
\end{remark}

\begin{restatable}[Convergence of SHINE to the Hypergradient using ULI]{thm}{shine}
\label{th:shine}
Let us denote $p_\theta^{(n)}$, the SHINE direction for iterate $n$ in \autoref{alg:qn} with $b=\texttt{true}$.
Under Assumptions~\ref{ass:uli} and~\ref{ass:smooth-conv}, for a given parameter $\theta$, $(z_n)$ converges q-superlinearly to $z^\star$ and
\begin{equation*}
    \lim_{n \to \infty} p_\theta^{(n)} = \frac{\partial \mathcal{L}}{\partial \theta}\Bigr|_{z^\star} \review{.}
\end{equation*}
\end{restatable}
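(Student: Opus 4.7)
The plan is to invoke the classical convergence theory for Broyden's method under the ULI condition, which yields both the q-superlinear convergence of the iterates and the convergence of the qN matrices $B_n$ to the true Jacobian $J_{g_\theta}(z^\star)$. I would then push this matrix convergence through the product that defines $p_\theta^{(n)}$ using continuity of the remaining factors and invoke \autoref{th:bai}.

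First, I would establish the matrix convergence $B_n \to J_{g_\theta}(z^\star)$. Under \autoref{ass:smooth-conv}, a standard bounded-deterioration estimate for Broyden's rank-one update (using the Lipschitz continuity of $J_{g_\theta}$ near $z^\star$ together with the secant equation $B_{n+1} s_n = y_n$) gives
\begin{equation*}
\|B_{n+1} - J_{g_\theta}(z^\star)\|_F \leq \|B_n - J_{g_\theta}(z^\star)\|_F + C\bigl(\|z_{n+1} - z^\star\| + \|z_n - z^\star\|\bigr),
\end{equation*}
and the summability assumption $\sum_n \|z_n - z^\star\| < \infty$ keeps the sequence $(\|B_n - J_{g_\theta}(z^\star)\|_F)$ bounded. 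Combined with the Dennis--Moré limit $\|(B_n - J_{g_\theta}(z^\star))s_n\|/\|s_n\| \to 0$ that falls out of the same estimate, and observing that $s_n/\|s_n\| = \pm p_n/\|p_n\|$, the ULI assumption allows any unit vector to be written as a bounded linear combination of $d$ normalized steps drawn from a window of length $m$. A decomposition argument in the spirit of \citet{Li1998ConvergenceMatrix} then upgrades this directional limit to full operator-norm convergence $B_n \to J_{g_\theta}(z^\star)$.

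Given this matrix convergence and the invertibility of $J_{g_\theta}(z^\star)$, the Banach lemma implies that $B_n$ is invertible for all sufficiently large $n$ and that $B_n^{-1} \to J_{g_\theta}(z^\star)^{-1}$ in operator norm. The q-superlinear rate of $(z_n)$ then follows from the classical Dennis--Moré characterization of superlinear convergence for qN methods applied with the error bound above. To conclude, $z_n \to z^\star$ is given by \autoref{ass:smooth-conv}(i), so continuity of $\nabla_z \mathcal{L}$ and $\partial g_\theta/\partial \theta$ yields $\nabla_z \mathcal{L}(z_n) \to \nabla_z \mathcal{L}(z^\star)$ and $\partial g_\theta/\partial \theta|_{z_n} \to \partial g_\theta/\partial \theta|_{z^\star}$; chaining these three convergences in the product defining $p_\theta^{(n)}$ and invoking \autoref{th:bai} gives the claimed limit.

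The main obstacle is the matrix convergence step: the Dennis--Moré estimate by itself only controls $B_n - J_{g_\theta}(z^\star)$ along the one-dimensional direction $s_n$, and it is precisely the role of the ULI assumption, together with bounded deterioration, to upgrade this one-direction information into full operator-norm convergence. Tracking the bounded-deterioration bound across windows of length $m$ carefully --- so that the per-direction error controlled at steps $n_1, \ldots, n_d$ does not inflate between the window endpoints, and so that the window of $d$ linearly independent search directions eventually probes $B_n$ in every direction at a uniformly non-degenerate angle --- is the delicate part.
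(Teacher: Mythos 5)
Your proposal is correct and takes essentially the same route as the paper: establish $B_n \to J_{g_\theta}(z^\star)$ under ULI, deduce $B_n^{-1}\to J_{g_\theta}(z^\star)^{-1}$ by continuity of inversion, and then chain the continuity of $\nabla_z\mathcal{L}$ and $\frac{\partial g_\theta}{\partial\theta}$ through the product defining $p_\theta^{(n)}$. The only difference is that the paper obtains the matrix convergence and the q-superlinear rate by directly invoking \citet[Theorem~5.7]{MoreTrangenstein}, whereas you sketch the underlying bounded-deterioration/Dennis--Mor\'e/ULI window argument yourself, which is precisely the content of that cited result.
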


\begin{proof}
From \citet[Theorem~5.7]{MoreTrangenstein} we obtain that $\lim_{n \to \infty} B_n = J_{g_\theta}(z^\star)$.
We can then conclude using the continuity of the inversion operator on the space of invertible matrices and of the right and left matrix vector multiplications.
A complete proof is given in \autoref{proof:shine}.
\end{proof}

\autoref{th:shine} establishes convergence of the SHINE direction to the true Hypergradient, but relies on \autoref{ass:uli} (ULI).
While ULI is often used to prove convergence results for qN matrices, e.g. in \citep{Li1998ConvergenceMatrix, Nocedal2006Quasi-NewtonMethods, Conn1991ConvergenceUpdate}, it is a strong assumption whose satisfaction in practice is debatable, cf., e.g.,~\citep{SROneConv_KBS}.
For Broyden's method, ULI is violated in all numerical experiments in~\citep{Mannel2021ConvergenceEquations,Mannel2021OnProblems,Mannel2020OnMatrices}, and those works also prove that ULI is necessarily violated in certain settings (but the setting of this work is not covered).
In the following we therefore derive results that do not involve ULI.

\subsection{Outer Problem Awareness}

The ULI assumption guarantees convergence of $B_n^{-1}$ to $J_{g_\theta}(z^\star)^{-1}$.
However, \autoref{eq:hypergrad} only requires the multiplication of $J_{g_\theta}(z^\star)^{-1}$ with  $\frac{\partial g_{\theta}}{\partial \theta}|_{z^\star}$ from the right and $\nabla_z \mathcal{L}(z^\star)$ from the left.

\paragraph{BFGS with OPA}

In order to strengthen \autoref{th:shine}, let us consider the setting of bi-level optimization with a single regularizing hyperparameter $\theta$. There, the partial derivative $\frac{\partial g_{\theta}}{\partial \theta}|_{z^\star}$ is a $d$-dimensional vector and it is possible to compute its approximation $\frac{\partial g_{\theta}}{\partial \theta}|_{z_n}$ at a reasonable cost.
We propose to incorporate additional updates of the quasi-Newton matrix $B_n$ into \autoref{alg:qn} that improve the approximation quality of $B_n^{-1}$ in the direction $\frac{\partial g_{\theta}}{\partial \theta}|_{z_n}$ (thus asymptotically in the direction $\frac{\partial g_{\theta}}{\partial \theta}|_{z^\star}$). Given a current iterate pair $(z_n,B_n)$, these additional updates only change $B_n$, but not $z_n$. We will demonstrate that a suitable update direction $e_n \in \mathbb{R}^d$ is given by
\begin{equation}
    \label{eq:opa}
    e_n = t_n B_n^{-1} \frac{\partial g_{\theta}}{\partial \theta}\Bigr|_{z_n},
\end{equation}
where $(t_n)\subset [0,\infty)$ satisfies $\sum_n t_n <\infty$.
This update direction will be used to create an extra secant condition $X^{-1}(g_\theta(z_n + e_n) - g_\theta(z_n)) = e_n$ for the additional update of $B_n$.
Since this extra update is based on the outer problem,
we refer to this technique as Outer-Problem Awareness (OPA).
The complete pseudo code of the OPA method in the LBFGS algorithm~\citep{Liu1989OnOptimization} is given in \autoref{sec:LBFGS}.

We now prove that if extra updates are applied at a fixed frequency, then fast (q-superlinear) convergence of $(z_n)$ to $z^\star$ is retained, while convergence of the SHINE direction to the true Hypergradient is also ensured.
To show this, we use the following assumption.

\begin{assumption}[Assumptions for BFGS]
    \label{ass:BFGS}
    Let $g_\theta(z)=\nabla_z r_{\theta}(z)$ for some $C^2$ function $r_\theta:\mathbb{R}^d\rightarrow\mathbb{R}$.
	Consider \autoref{alg:qn} with $b=\texttt{false}$.
	We assume some regularity on $r$ and that an appropriate line search is used.
	An extended version of this assumption is given in \autoref{proof:opa} (\autoref{ass:BFGS-ext}).
\end{assumption}

\begin{restatable}[Convergence of SHINE to the Hypergradient for BFGS with OPA]{thm}{opa}
\label{th:opa}
Let us consider $p_\theta^{(n)}$, the SHINE direction for iterate $n$ in \autoref{alg:qn} that is enriched by extra updates in the direction $e_n$ defined in \autoref{eq:opa}.
Under Assumptions~\ref{ass:smooth-conv} (ii-iii) and \ref{ass:BFGS}, for a given parameter $\theta$, we have the following:
\autoref{alg:qn}, for any symmetric and positive definite matrix $B_0$, generates a sequence $(z_n)$ that converges q-superlinearly to $z^\star$, and there holds
\begin{equation}\label{eq_SHINEpropertyforBFGS}
    \lim_{n \to \infty} p_\theta^{(n)} = \frac{\partial \mathcal{L}}{\partial \theta}\Bigr|_{z^\star} \review{.}
\end{equation}
\end{restatable}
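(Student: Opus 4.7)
The plan is to split the proof into two tasks: (a) showing that the OPA-enhanced BFGS iterates $(z_n)$ still converge q-superlinearly to $z^\star$, and (b) showing that this convergence, together with the extra secant conditions introduced by OPA, forces the sequence $u_n := B_n^{-1} \frac{\partial g_\theta}{\partial \theta}\bigr|_{z_n}$ to converge to $J_{g_\theta}(z^\star)^{-1} \frac{\partial g_\theta}{\partial \theta}\bigr|_{z^\star}$. Continuity of $\nabla_z \mathcal{L}$ then closes \autoref{eq_SHINEpropertyforBFGS}.

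For (a), the key observation is that the extra OPA updates modify only the qN matrix $B_n$ and leave the iterates $z_n$ untouched. Since the extra update direction satisfies $\|e_n\| = t_n \|B_n^{-1} v_n\|$ with $v_n := \frac{\partial g_\theta}{\partial \theta}\bigr|_{z_n}$ and $\sum_n t_n < \infty$, the cumulative perturbation of $(B_n)$ is summable in any reasonable norm. Under \autoref{ass:BFGS} (local strong convexity of $r_\theta$, Lipschitz continuous Hessian, Wolfe line search), a perturbed version of the classical Byrd--Nocedal--Yuan and Dennis--Moré analysis applies: one tracks the BFGS trace/determinant bounds through both the ordinary and the extra updates and checks that a summable perturbation cannot destroy their asymptotic behaviour. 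This yields q-superlinear convergence as well as uniform upper and lower bounds on the spectrum of $B_n$, which will be essential for (b). The one point requiring care is that each extra update preserves positive-definiteness, i.e.\ the curvature condition $e_n^\top (g_\theta(z_n + e_n) - g_\theta(z_n)) > 0$ holds for large $n$; this follows from local strong convexity of $r_\theta$ since $\|e_n\|\to 0$.

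For (b), I would use the extra secant condition $B_{n+1} e_n = g_\theta(z_n + e_n) - g_\theta(z_n)$. Taylor expanding the right-hand side via the Lipschitz property of $J_{g_\theta}$ and substituting $e_n = t_n u_n$ yields, after dividing by $t_n$,
\begin{equation*}
    B_{n+1} u_n = J_{g_\theta}(z_n) u_n + O(t_n).
\end{equation*}
Left-multiplying by $B_{n+1}^{-1}$ and using the uniform bound on $\|B_{n+1}^{-1}\|$ from (a) gives $B_{n+1}^{-1} v_n = u_n + O(t_n)$. Writing $u_{n+1} - u_n = B_{n+1}^{-1}(v_{n+1} - v_n) + (B_{n+1}^{-1} v_n - u_n)$, continuity of $\frac{\partial g_\theta}{\partial \theta}$ together with the bound on $\|B_{n+1}^{-1}\|$ yields $\|v_{n+1} - v_n\| \leq C\|s_n\|$, and q-superlinear convergence of $(z_n)$ forces $\sum_n \|s_n\| < \infty$. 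Hence $\sum_n \|u_{n+1}-u_n\| <\infty$, so $(u_n)$ is Cauchy with some limit $u^\star$. Passing to the limit in the displayed identity, using $B_{n+1} u_n = v_{n+1} + B_{n+1}(u_n - u_{n+1}) \to v^\star := \frac{\partial g_\theta}{\partial \theta}\bigr|_{z^\star}$ (thanks to the boundedness of $\|B_{n+1}\|$), yields $v^\star = J_{g_\theta}(z^\star) u^\star$, i.e.\ $u^\star = J_{g_\theta}(z^\star)^{-1} v^\star$, as desired.

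The main obstacle is the interplay between (a) and (b): the uniform boundedness of $\|B_n^{-1}\|$ needed for (b) is a consequence of the perturbed BFGS analysis in (a), but the perturbation driving (a) is itself sized in terms of $\|B_n^{-1}\|$. Breaking this circularity calls for a bootstrap argument---first controlling $\|B_n^{-1}\|$ by a crude recursive bound relying only on $\sum_n t_n < \infty$, then upgrading to q-superlinearity, and only then using the refined spectral bounds to obtain the sharp estimate $\|u_{n+1}-u_n\| = O(\|s_n\|)+O(t_n)$ that makes $(u_n)$ Cauchy. Verifying that positive-definiteness is preserved across the interleaving of ordinary and extra updates is the most delicate technical point of this bootstrap.
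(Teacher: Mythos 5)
There is a genuine gap, and it sits at the heart of both halves of your argument: you treat the OPA updates as if their size were controlled by $t_n$. It is not. A BFGS update is invariant under rescaling of the secant pair $(e_n,\hat y_n)\mapsto(t e_n, t\hat y_n)$: the rank-two correction $\frac{\hat y_n\hat y_n^T}{\hat y_n^Te_n}-\frac{B_ne_ne_n^TB_n}{e_n^TB_ne_n}$ is $O(1)$ no matter how small $t_n$ is, its magnitude being governed by how far $B_n$ is from satisfying the new secant condition, not by $\norm{e_n}$. So the claim in (a) that "the cumulative perturbation of $(B_n)$ is summable" is false, and in (b) the step from the secant relation to "$B_{n+1}^{-1}v_n=u_n+O(t_n)$" does not follow. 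What the secant condition actually gives (with the correct indexing: it is the post-update matrix $\hat B_n$, not $B_{n+1}$, that satisfies it) is $\hat B_n^{-1}J_{g_\theta}(z_n)u_n=u_n+O(t_n)$; to replace $J_{g_\theta}(z_n)u_n$ by $v_n$ you would need $J_{g_\theta}(z_n)B_n^{-1}v_n\approx v_n$, i.e.\ exactly the approximation property you are trying to prove. With that $O(t_n)$ estimate gone, your bound on $(B_{n+1}^{-1}-B_n^{-1})v_n$ and hence the Cauchy argument for $(u_n)$ collapses; the limit-identification trick at the end is fine, but it has nothing to feed on. (A smaller issue: $\norm{v_{n+1}-v_n}\le C\norm{s_n}$ needs Lipschitz continuity of $z\mapsto\frac{\partial g_\theta}{\partial\theta}(z)$, whereas only continuity is assumed.)

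The missing ingredient is a Dennis--Moré-type property for the interleaved updates. The paper's proof treats the extra updates as genuine BFGS updates with asymptotically exact curvature pairs (the role of $\sum_n t_n<\infty$ is only to keep the auxiliary points $z_n+e_n$ under control, not to shrink the matrix change), invokes the Byrd--Schnabel--Shultz analysis to get q-superlinear convergence of $(z_n)$, and from Byrd--Nocedal obtains
\begin{equation*}
    \lim_{n\to\infty}\bigl(\hat B_n-J_{g_\theta}(z^\star)\bigr)\frac{s_n}{\norm{s_n}}=0
    \qquad\text{and}\qquad
    \lim_{N_e\ni n\to\infty}\bigl(B_n-J_{g_\theta}(z^\star)\bigr)\frac{e_n}{\norm{e_n}}=0 .
\end{equation*}
Since $e_n/\norm{e_n}=B_n^{-1}v_n/\norm{B_n^{-1}v_n}$, the second limit yields $B_n^{-1}v_n\to J_{g_\theta}(z^\star)^{-1}v^\star$, but only along the subsequence $N_e$ of extra updates; a separate fourth step transfers this to the full sequence by showing $\norm{B_n-B_{j_n}}\to0$ for the at most $2M-1$ intermediate regular updates (using their secant conditions and the first limit above) and by exploiting uniform continuity of inversion on a compact set of matrices with uniformly bounded norms and inverse norms. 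If you want to salvage your plan, you must prove some version of these Dennis--Moré limits for the interleaved scheme (e.g.\ via the trace/determinant potential you allude to, carried out for both update families) rather than a summable-perturbation bound, and then handle the subsequence-to-sequence transfer explicitly.
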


\begin{proof}
It follows from known results that the extra updates do not destroy the q-superlinear convergence of $(z_n)$.
The proof of \autoref{eq_SHINEpropertyforBFGS} relies firstly on the fact that by continuity of the derivative of $g_\theta$, we have $\lim_{n \to \infty} \frac{\partial g_{\theta}}{\partial \theta}|_{z_n} = \frac{\partial g_{\theta}}{\partial \theta}|_{z^\star}$.
Due to the extra updates we can show convergence of the qN matrices to the true Hessian in the direction of the extra steps $e_n$, from which \autoref{eq_SHINEpropertyforBFGS} follows.
A full proof is provided in \autoref{proof:opa}.
\end{proof}

\begin{remark}
	\autoref{th:opa} also holds without line searches (i.e., $\alpha_n=1$ for all $n$) and any $C^2$ function $r_\theta$ (such that $g_{\theta}(z) = \nabla_z r_{\theta}(z)$)  with locally Lipschitz continuous Hessian if $z_0$ is close enough to some $z^\star$ with $\nabla_z r_\theta(z^\star)=0$ and $\nabla_{zz}^2 r_\theta(z^\star)$ positive definite.
\end{remark}

We note that \autoref{th:opa} guarantees fast convergence of the iterates $(z_n)$ and that $z_0$ does not have to be close to $z^\star$ for that guarantee.
Also, there is no restriction on $B_0$ other than being symmetric and positive definite (which is satisfied for our choice $B_0=I$).
Finally, \autoref{th:opa} does not rely on ULI.
From a practical standpoint we thus regard \autoref{th:opa} as a much stronger result than \autoref{th:shine}.

\paragraph{Adjoint Broyden with OPA}
It is not practical to use the partial derivative $\frac{\partial g_{\theta}}{\partial \theta}$ in the DEQ setting because
it is a huge Jacobian that we do not have access to in practice.
In order to still leverage the core idea of OPA, we propose to use extra updates that ensure that $B_n^{-1}$ approximates $J_{g_\theta}(z^\star)^{-1}$ in the direction $\nabla_z \mathcal{L}(z^\star)$ applied by left-multiplication, as required by \autoref{eq:hypergrad}. An appropriate secant condition is given by
\begin{equation}
    \label{eq:adj-br-sec}
 	v_n^T B_{n+1} = v_n^T J_{g_\theta}(z_{n+1}),
\end{equation}
where
\begin{equation}
    \label{eq:adj-br-add-update}
    v_n^T = \nabla_z \mathcal{L}(z_n) B_n^{-1}.
\end{equation}
To incorporate the secant condition \autoref{eq:adj-br-sec}, we use the Adjoint Broyden's method~\citep{Schlenkrich2010OnMethods}, a qN method relying on the efficient vector-Jacobian multiplication by $J_{g_\theta}$ using auto-differentiation tools.
To prove convergence of the SHINE direction for this method, we need the following assumption.

\begin{assumption}[Uniform boundedness of the inverse qN matrices]
	\label{ass:bound}
	The sequence $(B_n)$ generated by \autoref{alg:qn} satisfies
	\begin{equation*}
		\sup_{n\in\mathbb{N}} \; \norm{B_n^{-1}}<\infty.
	\end{equation*}
\end{assumption}

\begin{remark}
	Convergence results for quasi-Newton methods
	usually include showing that \autoref{ass:bound} holds, cf. \citet[Theorem~3.2]{BroydenDennisMore} for Broyden's method and the BFGS method, respectively, \citet[Theorem~1]{Schlenkrich2010OnMethods} for the Adjoint Broyden's method.
	It can also be proved that \autoref{ass:bound} holds for globalized variants of these methods, e.g., for the line-search globalizations of Broyden's method proposed by \citet{LiFukushima2000}.
    We point out that \autoref{ass:uli} entails $\lim B_n=J_{g_\theta}(z^\star)$ and thus $\lim B_n^{-1}=J_{g_\theta}(z^\star)^{-1}$, so it is clearly stronger than \autoref{ass:bound}.
\end{remark}

\begin{restatable}[Convergence of SHINE to the Hypergradient for Adjoint Broyden with OPA]{thm}{adjbroyden}\label{th:adjbroy}
    Let us consider $p_\theta^{(n)}$, the SHINE direction for iterate $n$ in \autoref{alg:qn} with the Adjoint Broyden secant condition~\autoref{eq:adj-br-sec} and extra update in the direction $v_n$ defined in \autoref{eq:adj-br-add-update}.
    Under Assumptions~\ref{ass:smooth-conv} and \ref{ass:bound}, for a given parameter $\theta$, we have q-superlinear convergence of $(z_n)$ to $z^\star$ and
    \begin{equation*}
        \lim_{n \to \infty} p_\theta^{(n)} = \frac{\partial \mathcal{L}}{\partial \theta}\Bigr|_{z^\star} \review{.}
    \end{equation*}
\end{restatable}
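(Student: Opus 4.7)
The plan is to mirror the two-part structure of the proof of \autoref{th:opa}: first establishing q-superlinear convergence of the iterates $(z_n)$, then convergence of the SHINE direction. For the first part I would invoke the Adjoint Broyden convergence theory of \citet{Schlenkrich2010OnMethods}, augmented with a standard bounded-deterioration argument to accommodate the additional updates. Since these only impose an extra secant condition of the same algebraic form as the base method's (in the direction $v_n$), they preserve the Dennis--Mor\'e-type estimate that drives superlinear convergence; \autoref{ass:bound} then provides the uniform control on $\|B_n^{-1}\|$ needed for the argument to go through without assuming ULI.

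For the second part, set $v_n^T := \nabla_z \mathcal{L}(z_n) B_n^{-1}$ and $u^T := \nabla_z \mathcal{L}(z^\star) J_{g_\theta}(z^\star)^{-1}$, so that $p_\theta^{(n)} = v_n^T \frac{\partial g_\theta}{\partial \theta}\big|_{z_n}$. By continuity of $\nabla_z \mathcal{L}$ and of $\frac{\partial g_\theta}{\partial \theta}$ (\autoref{ass:smooth-conv}) and since $z_n \to z^\star$, it suffices to establish $v_n \to u$. The Adjoint Broyden secant~\autoref{eq:adj-br-sec} enforced at the extra step implies $v_n \in \ker(I - B_{n+1}^{-T} J_{g_\theta}(z_{n+1})^T)$, while $v_{n+1} = B_{n+1}^{-T} \nabla_z \mathcal{L}(z_{n+1})$; combining these, a short computation yields the perturbed linear recursion
\begin{equation*}
e_{n+1} = M_n\, e_n + B_{n+1}^{-T} \epsilon_n, \qquad e_n := v_n - u, \quad M_n := I - B_{n+1}^{-T} J_{g_\theta}(z_{n+1})^T,
\end{equation*}
with $\epsilon_n := \bigl(\nabla_z \mathcal{L}(z_{n+1}) - \nabla_z \mathcal{L}(z^\star)\bigr) - \bigl(J_{g_\theta}(z_{n+1}) - J_{g_\theta}(z^\star)\bigr)^T u$ tending to zero by continuity.

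The main obstacle will be controlling the operator $M_n$: it is not a priori contractive, and the secant forces only the rank-one constraint $M_n v_n = 0$, which is weaker than what one has in the BFGS OPA setting where the update direction already incorporates $B_n^{-1}$ applied to the hypergradient target. I would close the argument by combining (i) Lipschitz continuity of $J_{g_\theta}$ near $z^\star$, (ii) the summability $\sum_n \|z_n - z^\star\| < \infty$ from \autoref{ass:smooth-conv} together with the q-superlinear rate from the first part, which make the increments $v_{n+1} - v_n$ asymptotically small so that the effective action of $M_n$ stays close to the direction $v_n$ on which it vanishes, and (iii) the uniform bound on $B_n^{-T}$ from \autoref{ass:bound}. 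Together these should yield a recursion $\|e_{n+1}\| \leq \eta_n \|e_n\| + \delta_n$ with $\eta_n \to 0$ and $\delta_n$ summable, from which $e_n \to 0$ follows by standard arguments. A full proof, modeled on the appendix proof of \autoref{th:opa}, is deferred to the appendix.
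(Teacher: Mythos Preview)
Your first part (q-superlinear convergence via \citet{Schlenkrich2010OnMethods} plus bounded deterioration for the extra updates) is fine and matches the paper. The gap is in the second part.

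Your recursion $e_{n+1} = M_n e_n + B_{n+1}^{-T}\epsilon_n$ is algebraically correct on the subsequence $N_e$ of extra-update indices, but it cannot deliver the contraction you claim. Because $M_n v_n = 0$, one has $M_n e_n = M_n(v_n-u) = -M_n u$, a quantity that does not depend on $\|e_n\|$ at all. Making $\|M_n u\|$ small means showing $B_{n+1}^{-T} J_{g_\theta}(z_{n+1})^T u \approx u$, i.e., that $B_{n+1}$ already approximates $J_{g_\theta}$ well in the direction $u$ --- precisely the conclusion you are after. The heuristic that ``the effective action of $M_n$ stays close to the direction $v_n$ on which it vanishes'' is therefore circular (it presupposes $v_n\approx u$), and smallness of the increments $v_{n+1}-v_n$ does not rescue it. In general $\|M_n\|\not\to 0$; that would amount to $B_n\to J_{g_\theta}(z^\star)$, the ULI-type statement you are explicitly trying to avoid. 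There is also a structural issue you do not address: the secant \autoref{eq:adj-br-sec} in direction $v_n$ is only enforced on $N_e$, so even a working recursion would live on that subsequence, and you still need a mechanism to pass to the full sequence.

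The paper's proof takes a different route that sidesteps any recursion. It first disposes of the trivial case $\nabla_z\mathcal{L}(z^\star)=0$ using \autoref{ass:bound} directly. Otherwise it invokes \citet[Lemma~3]{Schlenkrich2010OnMethods}, a Dennis--Mor\'e-type estimate for the adjoint secant, which yields $\frac{\|v_n^T(B_n - J_{g_\theta}(z^\star))\|}{\|v_n\|}\to 0$ along $N_e$; this gives $\nabla_z\mathcal{L}(z_n)B_n^{-1}\to \nabla_z\mathcal{L}(z^\star)J_{g_\theta}(z^\star)^{-1}$ on that subsequence immediately. To transfer convergence to the full sequence, the paper proves the intermediate claim $\|B_{n+1}-B_n\|\to 0$ (from the Adjoint Broyden update formula and the same Lemma~3), then uses \autoref{ass:bound} and a Banach-lemma compactness argument to obtain $\|B_{j_n}^{-1}-B_n^{-1}\|\to 0$ whenever $\sup_n|j_n-n|<\infty$; the fixed frequency $M$ of the extra updates supplies such a sequence $(j_n)\subset N_e$.
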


\begin{proof}
	The q-superlinear convergence of $(z_n)$ follows from \citet[Theorem~2]{Schlenkrich2010OnMethods}.
	To establish convergence of the SHINE direction, we proceed in three steps. First, it is shown that for $\nabla_z{\mathcal{L}}(z^\star)=0$ the claim holds due to continuity and \autoref{ass:bound}. Then $\nabla_z{\mathcal{L}}(z^\star)\neq 0$ is considered and it is proved that the desired convergence holds on the subsequence that corresponds to the additional updates. Lastly, this result is transferred to the entire sequence by involving the fixed frequency of the additional updates.
	The complete proof is provided in \autoref{proof:adjbr}.
\end{proof}

Using the Adjoint Broyden's method comes at a computational cost.
Indeed, because we now rely on $J_{g_\theta}$, we have to store the activations of $g_\theta(z)$ (which has a computational cost in addition to a memory cost), but also perform the vector-Jacobian product in addition to the function evaluation.

\section{Results \label{sec:res}}

We test our method in 3 different setups and compare it to the original iterative inversion and its closest competitor, the Jacobian-Free method~\citep{Fung2021FixedBackprop}.
We draw the reader's attention to the fact that although the Jacobian-Free method~\citep{Fung2021FixedBackprop} is used outside the assumptions needed to have theoretical guarantees\footnote{See the results on contractivity in \autoref{sec:contract}.} of descent, it still performs relatively well in the Deep Equilibrium setting.
The same is true for SHINE: While the ULI assumption is not met (and we are in practice far from the fixed point convergence), it performs well in practice.

\paragraph{Implementations.}
All the bi-level optimization experiments were done using the HOAG code~\citep{Pedregosa2016HyperparameterGradient}\footnote{\url{https://github.com/fabianp/hoag}}, which is based on the Python scientific ecosystem~\citep{Harris2020ArrayNumPy, Virtanen2020SciPyPython, Pedregosa2011Scikit-learn:Perrot}.
Deep Equilibrium experiments were done using the PyTorch~\citep{Paszke2019PyTorch:Library} code for Multiscale DEQ~\citep{Bai2020MultiscaleModels}\footnote{\url{https://github.com/locuslab/mdeq}}, which was distributed under the MIT license.
Plots were done using Matplotlib~\citep{Hunter2007Matplotlib:Environment}, with Science Plots style~\citep{SciencePlots}.
DEQ trainings were done in a publicly funded HPC, on nodes with 4 V100 GPUs.

In practice, we never reach convergence of $(z_n)$, hence the approximate gradient might be far from the true gradient.
To improve the approximation quality, we now propose a variant of our method.

\paragraph{Fallback in the case of wrong inversion.}
Empirically, we noticed that using $B$ can sometimes produce bad approximations, although with very low probability.  We propose to detect this with by monitoring a telltale sign based on the norm of the approximation, as we verified on several examples that cases with a huge norm compared to the correct inversion also had a very bad correlation with the correct inversion.
In these cases, we can simply fallback onto another inversion method.
For the Deep Equilibrium experiments, when the norm of the inversion using SHINE is 1.3 times above the norm of the inversion using the Jacobian-Free method (which is available at no extra computational cost), we use the Jacobian-Free inversion.
We refer to this strategy as the {\em fallback strategy}.

\subsection{Bi-level optimization -- Hyperparameter optimization in Logistic Regression}

We first test SHINE in the simple setting of bi-level optimization for $\ell_2$-regularized LR, using the code from \citet{Pedregosa2016HyperparameterGradient} and the same datasets.
Convergence on unseen data is illustrated in \autoref{fig:log-reg}.\footnote{To facilitate the reader's understanding of the figures, we plot the empirical suboptimality, but we do remind them that there is no guarantee of convergence on held-out test data\new{; the kink present in the case of the real-sim dataset is an example of that}.}
An acceptable level of performance is reached twice faster for the SHINE method compared to any other competitor.
Another finding is that the refine strategy does not provide a definitive improvement over the vanilla version of SHINE.
\new{
In order to verify that the performance gain of SHINE is not simply driven by truncated inversion, we also run HOAG with limited number of inversion iteration and showed that this degrades its performances (see HOAG limited backward in \autoref{sec:bi-lvl-ext}).
}
%


\begin{figure}[t]
\centering
\includegraphics[width=\textwidth]{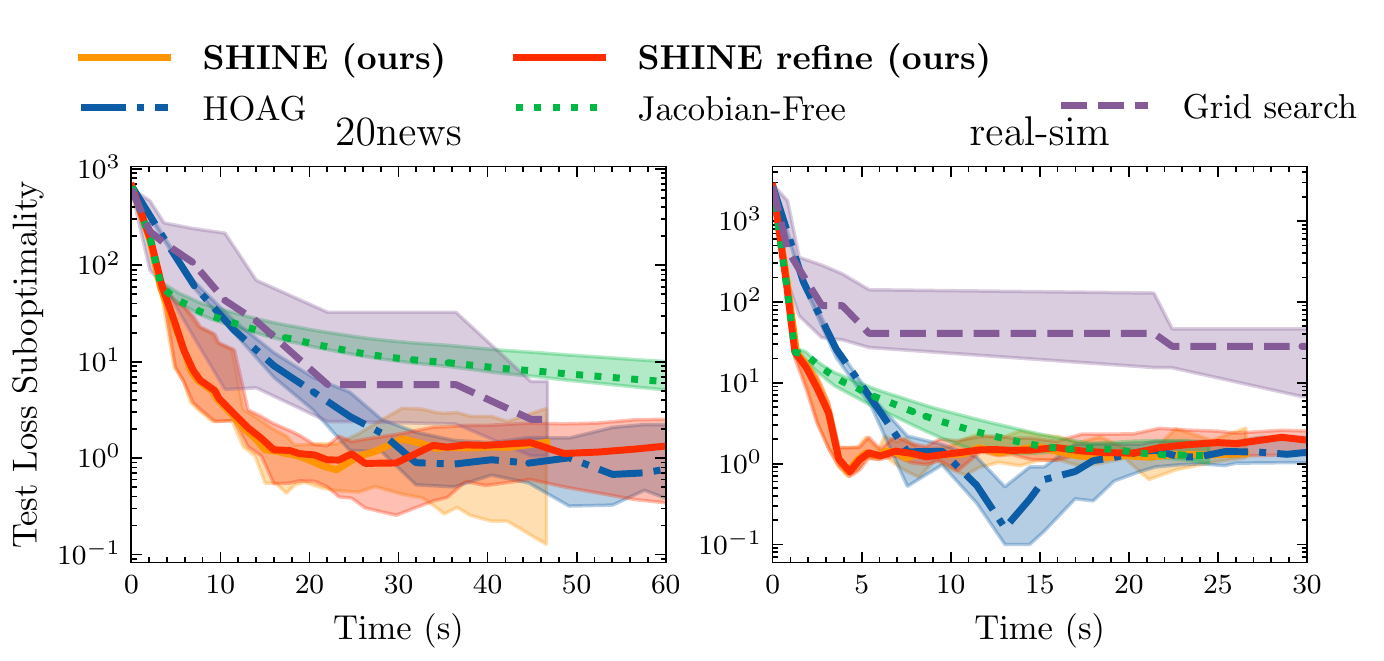}
\caption{\textbf{Bi-level optimization:} Convergence of \tom{held-out test loss for} different hyperparameter optimization methods on the $\ell_2$-regularized logistic regression problem for the 2 datasets (20news~\citep{Lang1995NewsWeeder:Netnews} and real-sim~\citep{libsvm})
\tom{SHINE achieves the best performances for both problems while the \review{Jacobian-Free} method is much slower, in particular on 20news. Note that the kink for HOAG on real-sim does not mean it is better as the optimization stops once the validation loss has converged and not the test one.} \review{The typical loss order of magnitude is $10^2$.} An extended figure with more methods is provided in \autoref{sec:bi-lvl-ext}.
\label{fig:log-reg}}
\vspace{-.1in}
\end{figure}

\begin{figure}
    \centering
    \includegraphics[width=\textwidth]{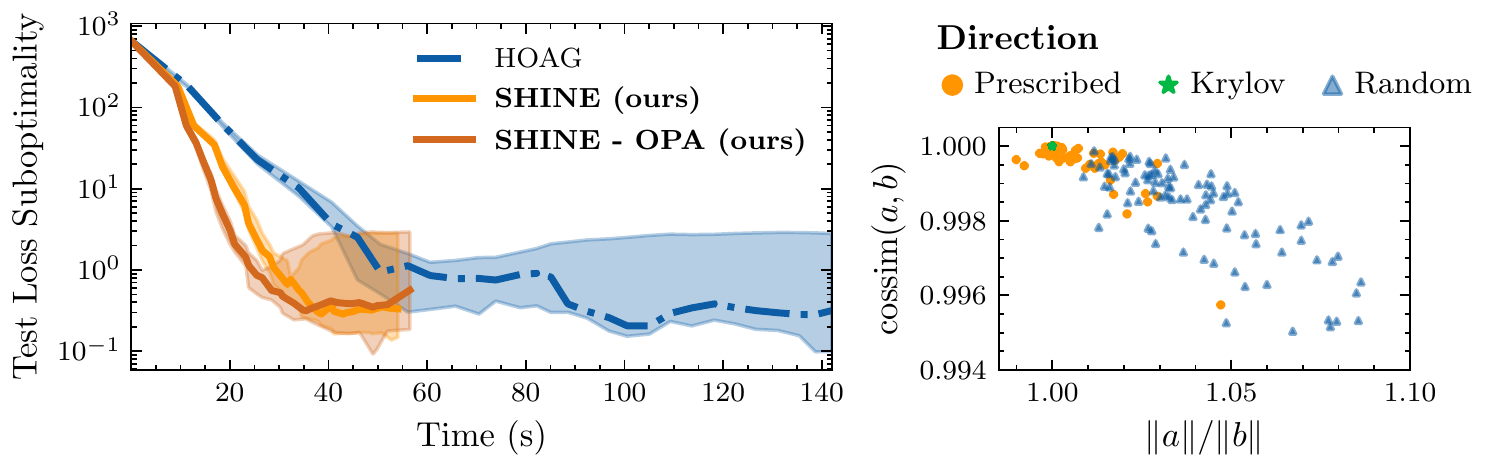}
    \caption{\textbf{Bi-level optimization with OPA:} \new{
		(\emph{left}) Convergence of different hyperparameter optimization methods on the $\ell_2$-regularized LR problem for the 20news dataset~\citep{Lang1995NewsWeeder:Netnews} on held-out test data. SHINE with OPA achieves similar performance as SHINE \review{without} OPA but with better convergence guarantees.
		(\emph{right}) Evaluation of the inversion quality in direction $v$ using OPA $b = B_n^{-1}v$ compared to the exact inverse $a = J_{g_{\theta}}(z^\star)^{-1}v$ for 3 different directions: the prescribed direction, the Krylov direction and a random direction. The points represent the cosine similarity between $a$ and $b$ as a function of the ratio of their norm and the closer to $(1, 1)$ the better. The inverse in the prescribed direction is better than in random direction.
	}}
    \label{fig:bilevel_opa}
    \vspace{-.1in}
\end{figure}


We also tested our implementation of OPA on the 20news dataset and present the results in \autoref{fig:bilevel_opa}.
In order to get a fair comparison, we implemented both SHINE, SHINE-OPA and HOAG using the same full Python code instead of relying on the original code which relied on the Fortran implementation of L-BFGS from \citep{Virtanen2020SciPyPython}.
While SHINE with OPA does not outperform the vanilla SHINE, it reaches similar performances, outperforming HOAG, and comes with strong theoretical grounding.
\review{Additional results on hyperparameter optimization for the regularized nonlinear least squares problem are available in \autoref{sec:nls}.}

We also showed on a smaller dataset, the breast cancer dataset~\citep{Dua:2019}, that OPA indeed ensures a better approximation of the inverse in the prescribed direction.
For a given split of the data, we compared the quality of the approximation of the inversion in three different directions: a prescribed direction chosen randomly but used for the OPA update, the Krylov direction ${\frac{\partial g_{\theta}}{\partial z}\Bigr|_{z^\star} (z_n - z_{n-1})}$ and a random direction not used in the qN algorithm.
The results for 100 runs with different random seeds are depicted in \autoref{fig:bilevel_opa}, where we can observe that OPA indeed ensures a better inversion \new{in the prescribed direction} compared to a random direction.
We also notice that a poor direction for the inversion \new{seems} correlated with a small magnitude.

\subsection{Deep Equilibrium Models}

Next, we tested SHINE on the more challenging DEQ setup.
Two experiments illustrate the performance of SHINE on the image classification task on two datasets.
For both datasets, we used the same model configuration as in the original Multiscale DEQ paper~\citep{Bai2020MultiscaleModels} and did not fine tune any hyperparameter.
For the different DEQ training methods, models for a given seed share the same unrolled-pretraining steps.
We do not include OPA in the DEQ results because
\new{
while the gradients are well correlated with the true ones (see \autoref{fig:cifar-opa-inv}), we observe a sharp initial performance drop that reduces its performance on Imagenet.
We provide partial results in \autoref{sec:opa-deq}.
}


\begin{figure}[!t]
    \centering
    \includegraphics{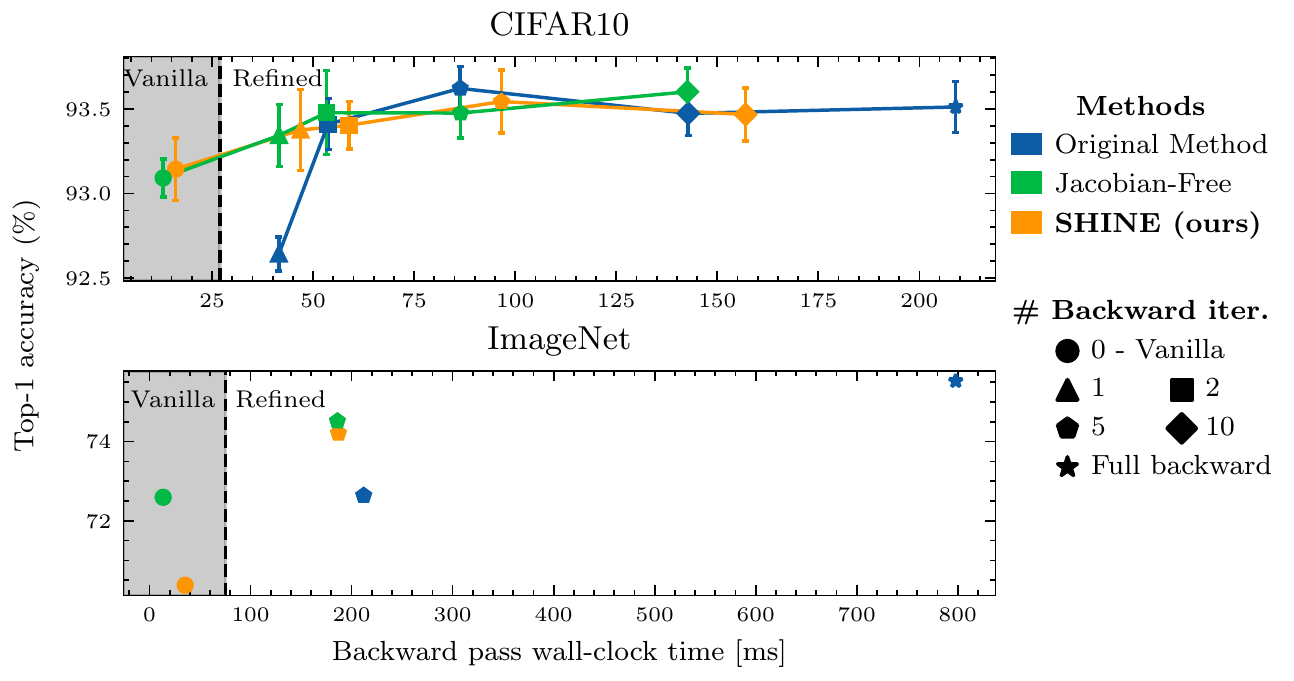}
    \caption{
        \textbf{DEQ:} Top-1 accuracy function of backward pass \new{runtime} for the different methods considered to train DEQs, on CIFAR~\citep{Krizhevsky2009LearningImages} and ImageNet~\citep{Deng2009ImageNet:Database}. \new{
        The original DEQ training method corresponds to the Full backward pass points and the vanilla SHINE and Jacobian-Free methods correspond to direct use of the inverse approximation without further refinement.
        The other points correspond to further refinements of the different methods with different number of iterations used to invert $J_{g_\theta}(z^\star)$ in the direction of $\nabla_z \mathcal{L}(z^\star)$. This highlights the trade-off between computations and performances driving the refinement choice.}
    }
    \label{fig:deq_pareto}
    \vspace{-.2in}
\end{figure}

\paragraph{CIFAR-10.}
The first dataset is CIFAR-10~\citep{Krizhevsky2009LearningImages} which features 60,000 $32\!\times\!32$ images representing 10 classes.
For this dataset, the size of the multi-scale fixed point is $d = 50$k.
We train the models for five different random seeds.

The results in \autoref{fig:deq_pareto} show that for the vanilla version, SHINE slightly outperforms the Jacobian-Free method~\citep{Fung2021FixedBackprop}.
Additionally, our results suggest that SHINE (in its vanilla version) is able to reduce the time taken for the backward pass almost 10-fold compared to the original method while retaining a competitive performance (on par with Res-Net-18~\citep{He2016DeepRecognition} at 92.9\%).
Finally, we do highlight that the Jacobian-Free method~\citep{Fung2021FixedBackprop} is able to perform well outside the scope of its theoretical assumptions, albeit with slightly worse performance than SHINE.
\zac{We conjecture that the batched stochastic gradient descent helps accelerated methods by averaging out the errors made in the approximation.}

\paragraph{ImageNet.}
The second dataset is the ImageNet dataset~\citep{Deng2009ImageNet:Database} which features 1.2 million images cropped to $224 \!\times\! 224$, representing 1000 classes.
This dataset is recognized as a large-scale computer vision problem and the dimension of the fixed point to find is $d = 190$k.

For this challenging task, we noticed that the vanilla version of SHINE was suffering a big drop just after the transition from unrolled pre-training to actual equilibrium training.
To remedy partly this problem, we introduced the fallback to Jacobian-Free inversion.
The results for a single random seed presented in \autoref{fig:deq_pareto} for the ImageNet dataset are given for SHINE with fallback.
The fallback is barely used\review{: in 1000 batches of size 32, only 2 samples used fallback, a proportion of $6.25\times 10^{-5}$}.

Despite the drop suffered at the beginning of the equilibrium training, SHINE in its refined version is able to perform on par with the Jacobian-Free method~\citep{Fung2021FixedBackprop}.
We also confirm the importance of choosing the right initialization to perform accelerated backpropagation, by showing that with a limited iterative inversion,
the performance of the original method deteriorates.
Finally, while the drop in performance for the accelerated methods is significant when applied in their vanilla version, we remind the reader that no fine-tuning was performed on the training hyperparameters, making those results encouraging (on par with architectures like ResNet-18~\citep{He2016DeepRecognition}).

\new{
The key take-away from \autoref{fig:deq_pareto} is that both SHINE and Jacobian-Free approximation methods allow to accelerate the DEQ's backward pass at a relatively low accuracy cost.\footnote{More on the overall computational effort can be found in \autoref{tab:times}}
Moreover, using the proposed refined versions of these methods, the performance drop can be traded-off for acceleration.
}
\section{Conclusion and Discussion}
\label{sec:ccl}

We introduced SHINE, a method that leverages the qN matrices from the forward pass to obtain an approximation of the gradient of the loss function, thereby reducing the time needed to compute this gradient.
We showed that this method can be used on a wide range of applications going from bi-level optimization to small and large scale computer vision tasks.
We found that both SHINE and the Jacobian-Free method reduce the required amount of time for the backward pass of implicit models, potentially lowering the barriers for training implicit models.

As those methods still suffer from a small performance drop, there is 
room for further improvement.
In particular, a potential experimentation avenue would be to understand how to balance the efforts of the Adjoint Broyden method in order to come closer to guaranteeing the asymptotical correctness of the approximate inversion. On the theoretical side, this may involve the rate of convergence of the approximated gradient.
It also seems desirable to develop a version of \autoref{th:adjbroy} in which convergence of $(z_n)$ to $z^\star$ is not an assumption but rather follows from the assumptions, as achieved in \autoref{th:opa}.
We have no doubt that the contraction assumption used for the Jacobian-Free method would allow to prove such a result, but expect that a significantly weaker assumption will suffice.
\section*{Reproducibility Statement}

\new{We provide with the submission of this paper the full code necessary to reproduce the figures and the other quantitative results of the paper, from the training, to the evaluation and the actual figure drawing.
We made sure to use seeds and verified that the seeding was indeed allowing reproducible results.
We also provide time estimates for the reproduction of the figures.
We made sure to provide the full proofs for our theorems in the supplementary material of this manuscript.
The core concepts used in the proofs, and their sketches are also laid out in the main text.
}

\section*{Acknowledgements}
This work was performed using HPC resources from GENCI-IDRIS (Grant AD011011153R2).

\bibliography{references.bib, additional.bib}

\begin{thebibliography}{47}
\providecommand{\natexlab}[1]{#1}
\providecommand{\url}[1]{\texttt{#1}}
\expandafter\ifx\csname urlstyle\endcsname\relax
  \providecommand{\doi}[1]{doi: #1}\else
  \providecommand{\doi}{doi: \begingroup \urlstyle{rm}\Url}\fi

\bibitem[lib()]{libsvm}
Libsvm datasets.
\newblock \url{https://www.csie.ntu.edu.tw/~cjlin/libsvmtools/datasets/}.
\newblock Accessed: 2021-05-06.

\bibitem[Amos and Zico~Kolter(2017)]{Amos2017OptNet:Networks}
B.~Amos and J.~Zico~Kolter.
\newblock {OptNet: Differentiable Optimization as a Layer in Neural Networks}.
\newblock In \emph{ICML}, 2017.

\bibitem[Bai et~al.(2019)Bai, Kolter, and Koltun]{Bai2019DeepModels}
S.~Bai, J.~Z. Kolter, and V.~Koltun.
\newblock {Deep equilibrium models}.
\newblock In \emph{NeurIPS}, 2019.

\bibitem[Bai et~al.(2020)Bai, Koltun, and Kolter]{Bai2020MultiscaleModels}
S.~Bai, V.~Koltun, and J.~Z. Kolter.
\newblock {Multiscale deep equilibrium models}.
\newblock In \emph{NeurIPS}, 2020.

\bibitem[Berahas et~al.(2021)Berahas, Jahani, Richt{\'{a}}rik, and
  Tak{\'{a}}{\v{c}}]{Berahas2021Quasi-NewtonSample}
A.~S. Berahas, M.~Jahani, P.~Richt{\'{a}}rik, and M.~Tak{\'{a}}{\v{c}}.
\newblock {Quasi-Newton Methods for Machine Learning: Forget the Past, Just
  Sample}.
\newblock \emph{Optimization Methods and Software}, 2021.

\bibitem[Bergstra and Bengio(2012)]{Bergstra2012RandomBengio}
J.~Bergstra and Y.~Bengio.
\newblock {Random Search for Hyper-Parameter Optimization Yoshua Bengio}.
\newblock \emph{Journal of Machine Learning Research}, 13:\penalty0 281--305,
  2012.

\bibitem[Broyden(1965)]{Broyden1965AEquations}
C.~G. Broyden.
\newblock {A Class of Methods for Solving Nonlinear Simultaneous Equations}.
\newblock \emph{Mathematics of Computation}, 19\penalty0 (92):\penalty0
  577--593, 1965.

\bibitem[{Broyden} et~al.(1973){Broyden}, jun. {Dennis}, and
  {More}]{BroydenDennisMore}
C.~G. {Broyden}, J.~E. jun. {Dennis}, and J.~J. {More}.
\newblock {On the local and superlinear convergence of quasi-Newton methods}.
\newblock \emph{{Journal of the Institute of Mathematics and its
  Applications}}, 12:\penalty0 223--245, 1973.

\bibitem[{Byrd} and {Nocedal}(1989)]{ByrdNocedal}
R.~H. {Byrd} and J.~{Nocedal}.
\newblock {A tool for the analysis of quasi-Newton methods with application to
  unconstrained minimization}.
\newblock \emph{{SIAM Journal on Numerical Analysis}}, 26\penalty0
  (3):\penalty0 727--739, 1989.

\bibitem[{Byrd} et~al.(1988){Byrd}, {Schnabel}, and
  {Shultz}]{ByrdSchnabelShultz}
R.~H. {Byrd}, R.~B. {Schnabel}, and G.~A. {Shultz}.
\newblock {Parallel quasi-Newton methods for unconstrained optimization}.
\newblock \emph{{Mathematical Programming. Series A. Series B}}, 42\penalty0 (2
  (B)):\penalty0 273--306, 1988.

\bibitem[Chen et~al.(2018)Chen, Rubanova, Bettencourt, and
  Duvenaud]{Chen2018NeuralEquations}
R.~T. Chen, Y.~Rubanova, J.~Bettencourt, and D.~Duvenaud.
\newblock {Neural Ordinary differential equations}.
\newblock In \emph{NeurIPS}, 2018.

\bibitem[Chen et~al.(2016)Chen, Xu, Zhang, and Guestrin]{Chen2016TrainingCost}
T.~Chen, B.~Xu, C.~Zhang, and C.~Guestrin.
\newblock {Training Deep Nets with Sublinear Memory Cost}.
\newblock Technical report, 2016.

\bibitem[Conn et~al.(1991)Conn, Gould, and Toint]{Conn1991ConvergenceUpdate}
A.~R. Conn, N.~I. Gould, and P.~L. Toint.
\newblock {Convergence of quasi-Newton matrices generated by the symmetric rank
  one update}.
\newblock \emph{Mathematical Programming}, 50\penalty0 (1-3):\penalty0
  177--195, 1991.
\newblock ISSN 00255610.
\newblock \doi{10.1007/BF01594934}.

\bibitem[Deng et~al.(2009)Deng, Dong, Socher, Li, {Kai Li}, and {Li
  Fei-Fei}]{Deng2009ImageNet:Database}
J.~Deng, W.~Dong, R.~Socher, L.-J. Li, {Kai Li}, and {Li Fei-Fei}.
\newblock {ImageNet: A large-scale hierarchical image database}.
\newblock In \emph{CVPR}. Institute of Electrical and Electronics Engineers
  (IEEE), 2009.

\bibitem[Dua and Graff(2017)]{Dua:2019}
D.~Dua and C.~Graff.
\newblock {UCI} machine learning repository, 2017.
\newblock URL \url{http://archive.ics.uci.edu/ml}.

\bibitem[{Fayez Khalfan} et~al.(1993){Fayez Khalfan}, {Byrd}, and
  {Schnabel}]{SROneConv_KBS}
H.~{Fayez Khalfan}, R.~H. {Byrd}, and R.~B. {Schnabel}.
\newblock {A theoretical and experimental study of the symmetric rank-one
  update}.
\newblock \emph{{SIAM Journal on Optimization}}, 3\penalty0 (1):\penalty0
  1--24, 1993.

\bibitem[Fung et~al.(2021)Fung, Heaton, Li, Mckenzie, Osher, and
  Yin]{Fung2021FixedBackprop}
S.~W. Fung, H.~Heaton, Q.~Li, D.~Mckenzie, S.~Osher, and W.~Yin.
\newblock {Fixed Point Networks: Implicit Depth Models with Jacobian-Free
  Backprop}.
\newblock Technical report, 2021.

\bibitem[Garrett and Peng(2021)]{SciencePlots}
J.~D. Garrett and H.-H. Peng.
\newblock {garrettj403/SciencePlots}, Feb. 2021.
\newblock URL \url{http://doi.org/10.5281/zenodo.4106649}.

\bibitem[Gilton et~al.(2021)Gilton, Ongie, and Willett]{Gilton2021DeepImaging}
D.~Gilton, G.~Ongie, and R.~Willett.
\newblock {Deep Equilibrium Architectures for Inverse Problems in Imaging}.
\newblock Technical report, 2021.

\bibitem[Gomez et~al.(2017)Gomez, Ren, Urtasun, and
  Grosse]{Gomez2017TheActivations}
A.~N. Gomez, M.~Ren, R.~Urtasun, and R.~B. Grosse.
\newblock {The Reversible Residual Network: Backpropagation Without Storing
  Activations}.
\newblock In \emph{NIPS}, 2017.

\bibitem[Gower and Richt{\'{a}}rik(2017)]{Gower2017RandomizedAlgorithms}
R.~M. Gower and P.~Richt{\'{a}}rik.
\newblock {Randomized quasi-Newton updates are linearly convergent matrix
  inversion algorithms}.
\newblock \emph{SIAM Journal on Matrix Analysis and Applications}, 38\penalty0
  (4):\penalty0 1380--1409, 2017.

\bibitem[Harris et~al.(2020)Harris, Millman, van~der Walt, Gommers, Virtanen,
  Cournapeau, Wieser, Taylor, Berg, Smith, Kern, Picus, Hoyer, van Kerkwijk,
  Brett, Haldane, del R{\'{i}}o, Wiebe, Peterson, G{\'{e}}rard-Marchant,
  Sheppard, Reddy, Weckesser, Abbasi, Gohlke, and
  Oliphant]{Harris2020ArrayNumPy}
C.~R. Harris, K.~J. Millman, S.~J. van~der Walt, R.~Gommers, P.~Virtanen,
  D.~Cournapeau, E.~Wieser, J.~Taylor, S.~Berg, N.~J. Smith, R.~Kern, M.~Picus,
  S.~Hoyer, M.~H. van Kerkwijk, M.~Brett, A.~Haldane, J.~F. del R{\'{i}}o,
  M.~Wiebe, P.~Peterson, P.~G{\'{e}}rard-Marchant, K.~Sheppard, T.~Reddy,
  W.~Weckesser, H.~Abbasi, C.~Gohlke, and T.~E. Oliphant.
\newblock {Array programming with NumPy}.
\newblock \emph{Nature}, 585\penalty0 (7825):\penalty0 357--362, 9 2020.

\bibitem[He et~al.(2016)He, Zhang, Ren, and Sun]{He2016DeepRecognition}
K.~He, X.~Zhang, S.~Ren, and J.~Sun.
\newblock {Deep residual learning for image recognition}.
\newblock In \emph{CVPR}, 2016.

\bibitem[Heaton et~al.(2021)Heaton, Fung, Gibali, and
  Yin]{Heaton2021Feasibility-basedNetworks}
H.~Heaton, S.~W. Fung, A.~Gibali, and W.~Yin.
\newblock {Feasibility-based Fixed Point Networks}.
\newblock Technical report, 2021.

\bibitem[Hunter(2007)]{Hunter2007Matplotlib:Environment}
J.~D. Hunter.
\newblock {Matplotlib: A 2D graphics environment}.
\newblock \emph{Computing in Science and Engineering}, 9\penalty0 (3):\penalty0
  90--95, 2007.

\bibitem[Kingma and Ba(2015)]{Kingma2015Adam:Optimization}
D.~P. Kingma and J.~L. Ba.
\newblock {Adam: A method for stochastic optimization}.
\newblock In \emph{ICLR}. International Conference on Learning Representations,
  ICLR, 12 2015.

\bibitem[Krantz and Parks(2013)]{Krantz2013TheApplications}
S.~G. Krantz and H.~R. Parks.
\newblock \emph{{The Implicit Function Theorem: History, Theory, and
  Applications}}.
\newblock Springer New York, 1 2013.

\bibitem[Krizhevsky(2009)]{Krizhevsky2009LearningImages}
A.~Krizhevsky.
\newblock {Learning Multiple Layers of Features from Tiny Images}.
\newblock Technical report, 2009.

\bibitem[Lang(1995)]{Lang1995NewsWeeder:Netnews}
K.~Lang.
\newblock {NewsWeeder: Learning to Filter Netnews}.
\newblock In \emph{ICML}, pages 331--339. Elsevier, 1995.

\bibitem[{Li} and {Fukushima}(2000)]{LiFukushima2000}
D.~{Li} and M.~{Fukushima}.
\newblock {A derivative-free line search and global convergence of Broyden-like
  method for nonlinear equations}.
\newblock \emph{{Optimization Methods \& Software}}, 13\penalty0 (3):\penalty0
  181--201, 2000.

\bibitem[Li et~al.(1998)Li, Zeng, and Zhou]{Li1998ConvergenceMatrix}
D.~Li, J.~Zeng, and S.~Zhou.
\newblock {Convergence of Broyden-Like Matrix}.
\newblock \emph{Applied Mathematics Letter}, 11\penalty0 (5):\penalty0 35--37,
  1998.

\bibitem[Liu and Nocedal(1989)]{Liu1989OnOptimization}
D.~C. Liu and J.~Nocedal.
\newblock {On the limited memory BFGS method for large scale optimization}.
\newblock \emph{Mathematical Programming, Series B}, 45\penalty0 (3):\penalty0
  503--528, 1989.

\bibitem[Lorraine et~al.(2020)Lorraine, Vicol, and
  Duvenaud]{Lorraine2020OptimizingDifferentiation}
J.~Lorraine, P.~Vicol, and D.~Duvenaud.
\newblock {Optimizing Millions of Hyperparameters by Implicit Differentiation}.
\newblock In \emph{AISTATS}, 2020.

\bibitem[Mannel(2020)]{Mannel2020OnMatrices}
F.~Mannel.
\newblock {On the convergence of the Broyden-like matrices}.
\newblock 2020.

\bibitem[Mannel(2021{\natexlab{a}})]{Mannel2021ConvergenceEquations}
F.~Mannel.
\newblock {Convergence properties of the Broyden-like method for mixed
  linear–nonlinear systems of equations}.
\newblock \emph{Numerical Algorithms}, pages 1--29, 2021{\natexlab{a}}.

\bibitem[Mannel(2021{\natexlab{b}})]{Mannel2021OnProblems}
F.~Mannel.
\newblock {On the convergence of Broyden’s method and some accelerated
  schemes for singular problems}.
\newblock 2021{\natexlab{b}}.

\bibitem[{More} and {Trangenstein}(1976)]{MoreTrangenstein}
J.~{More} and J.~{Trangenstein}.
\newblock {On the global convergence of Broyden's method}.
\newblock \emph{{Mathematics of Computation}}, 30:\penalty0 523--540, 1976.

\bibitem[Nocedal and Wright(2006)]{Nocedal2006Quasi-NewtonMethods}
J.~Nocedal and S.~Wright.
\newblock {Quasi-Newton Methods}.
\newblock In \emph{Numerical Optimization}, pages 135--163. 2006.

\bibitem[Paszke et~al.(2019)Paszke, Gross, Massa, Lerer, Bradbury, Chanan,
  Killeen, Lin, Gimelshein, Antiga, Desmaison, K{\"{o}}pf, Yang, DeVito,
  Raison, Tejani, Chilamkurthy, Steiner, Fang, Bai, and
  Chintala]{Paszke2019PyTorch:Library}
A.~Paszke, S.~Gross, F.~Massa, A.~Lerer, J.~Bradbury, G.~Chanan, T.~Killeen,
  Z.~Lin, N.~Gimelshein, L.~Antiga, A.~Desmaison, A.~K{\"{o}}pf, E.~Yang,
  Z.~DeVito, M.~Raison, A.~Tejani, S.~Chilamkurthy, B.~Steiner, L.~Fang,
  J.~Bai, and S.~Chintala.
\newblock {PyTorch: An Imperative Style, High-Performance Deep Learning
  Library}.
\newblock In \emph{NeurIPS}, 12 2019.

\bibitem[Pedregosa(2016)]{Pedregosa2016HyperparameterGradient}
F.~Pedregosa.
\newblock {Hyperparameter optimization with approximate gradient}.
\newblock \emph{33rd International Conference on Machine Learning, ICML 2016},
  2:\penalty0 1150--1159, 2016.

\bibitem[Pedregosa et~al.(2011)Pedregosa, Varoquaux, Gramfort, Michel, Thirion,
  Grisel, Blondel, Prettenhofer, Weiss, Dubourg, Vanderplas, Passos,
  Cournapeau, Brucher, Perrot, and Duchesnay]{Pedregosa2011Scikit-learn:Perrot}
F.~Pedregosa, G.~Varoquaux, A.~Gramfort, V.~Michel, B.~Thirion, O.~Grisel,
  M.~Blondel, P.~Prettenhofer, R.~Weiss, V.~Dubourg, J.~Vanderplas, A.~Passos,
  D.~Cournapeau, M.~Brucher, M.~Perrot, and E.~Duchesnay.
\newblock {Scikit-learn: Machine Learning in Python Ga{\"{e}}l Varoquaux
  Bertrand Thirion Vincent Dubourg Alexandre Passos PEDREGOSA, VAROQUAUX,
  GRAMFORT ET AL. Matthieu Perrot}.
\newblock \emph{Journal of Machine Learning Research}, 12:\penalty0 2825--2830,
  2011.

\bibitem[Sander et~al.(2021)Sander, Ablin, Blondel, and
  Peyr{\'{e}}]{Sander2021MomentumNetworks}
M.~E. Sander, P.~Ablin, M.~Blondel, and G.~Peyr{\'{e}}.
\newblock {Momentum Residual Neural Networks}.
\newblock Technical report, 2021.

\bibitem[Schlenkrich et~al.(2010)Schlenkrich, Griewank, and
  Walther]{Schlenkrich2010OnMethods}
S.~Schlenkrich, A.~Griewank, and A.~Walther.
\newblock {On the local convergence of adjoint Broyden methods}.
\newblock \emph{Mathematical Programming}, 121\penalty0 (2):\penalty0 221--247,
  2010.
\newblock ISSN 14364646.
\newblock \doi{10.1007/s10107-008-0232-y}.

\bibitem[Sherman and Morrison(1950)]{Sherman1950AdjustmentMatrix}
J.~Sherman and W.~J. Morrison.
\newblock {Adjustment of an inverse matrix corresponding to a change in one
  element of a given matrix}.
\newblock \emph{The Annals of Mathematical Statistics}, 21\penalty0
  (1):\penalty0 124--127, 1950.

\bibitem[Telgarsky(2016)]{Telgarsky2016BenefitsNetworks}
M.~Telgarsky.
\newblock {Benefits of depth in neural networks}.
\newblock \emph{Journal of Machine Learning Research}, 49\penalty0
  (June):\penalty0 1517--1539, 2 2016.

\bibitem[Virtanen et~al.(2020)Virtanen, Gommers, Oliphant, Haberland, Reddy,
  Cournapeau, Burovski, Peterson, Weckesser, Bright, van~der Walt, Brett,
  Wilson, Millman, Mayorov, Nelson, Jones, Kern, Larson, Carey, Polat, Feng,
  Moore, VanderPlas, Laxalde, Perktold, Cimrman, Henriksen, Quintero, Harris,
  Archibald, Ribeiro, Pedregosa, van Mulbregt, Vijaykumar, Bardelli, Rothberg,
  Hilboll, Kloeckner, Scopatz, Lee, Rokem, Woods, Fulton, Masson,
  H{\"{a}}ggstr{\"{o}}m, Fitzgerald, Nicholson, Hagen, Pasechnik, Olivetti,
  Martin, Wieser, Silva, Lenders, Wilhelm, Young, Price, Ingold, Allen, Lee,
  Audren, Probst, Dietrich, Silterra, Webber, Slavi{\v{c}}, Nothman, Buchner,
  Kulick, Sch{\"{o}}nberger, de~Miranda~Cardoso, Reimer, Harrington,
  Rodr{\'{i}}guez, Nunez-Iglesias, Kuczynski, Tritz, Thoma, Newville,
  K{\"{u}}mmerer, Bolingbroke, Tartre, Pak, Smith, Nowaczyk, Shebanov, Pavlyk,
  Brodtkorb, Lee, McGibbon, Feldbauer, Lewis, Tygier, Sievert, Vigna, Peterson,
  More, Pudlik, Oshima, Pingel, Robitaille, Spura, Jones, Cera, Leslie, Zito,
  Krauss, Upadhyay, Halchenko, and
  V{\'{a}}zquez-Baeza]{Virtanen2020SciPyPython}
P.~Virtanen, R.~Gommers, T.~E. Oliphant, M.~Haberland, T.~Reddy, D.~Cournapeau,
  E.~Burovski, P.~Peterson, W.~Weckesser, J.~Bright, S.~J. van~der Walt,
  M.~Brett, J.~Wilson, K.~J. Millman, N.~Mayorov, A.~R. Nelson, E.~Jones,
  R.~Kern, E.~Larson, C.~J. Carey, I.~Polat, Y.~Feng, E.~W. Moore,
  J.~VanderPlas, D.~Laxalde, J.~Perktold, R.~Cimrman, I.~Henriksen, E.~A.
  Quintero, C.~R. Harris, A.~M. Archibald, A.~H. Ribeiro, F.~Pedregosa, P.~van
  Mulbregt, A.~Vijaykumar, A.~P. Bardelli, A.~Rothberg, A.~Hilboll,
  A.~Kloeckner, A.~Scopatz, A.~Lee, A.~Rokem, C.~N. Woods, C.~Fulton,
  C.~Masson, C.~H{\"{a}}ggstr{\"{o}}m, C.~Fitzgerald, D.~A. Nicholson, D.~R.
  Hagen, D.~V. Pasechnik, E.~Olivetti, E.~Martin, E.~Wieser, F.~Silva,
  F.~Lenders, F.~Wilhelm, G.~Young, G.~A. Price, G.~L. Ingold, G.~E. Allen,
  G.~R. Lee, H.~Audren, I.~Probst, J.~P. Dietrich, J.~Silterra, J.~T. Webber,
  J.~Slavi{\v{c}}, J.~Nothman, J.~Buchner, J.~Kulick, J.~L. Sch{\"{o}}nberger,
  J.~V. de~Miranda~Cardoso, J.~Reimer, J.~Harrington, J.~L.~C. Rodr{\'{i}}guez,
  J.~Nunez-Iglesias, J.~Kuczynski, K.~Tritz, M.~Thoma, M.~Newville,
  M.~K{\"{u}}mmerer, M.~Bolingbroke, M.~Tartre, M.~Pak, N.~J. Smith,
  N.~Nowaczyk, N.~Shebanov, O.~Pavlyk, P.~A. Brodtkorb, P.~Lee, R.~T. McGibbon,
  R.~Feldbauer, S.~Lewis, S.~Tygier, S.~Sievert, S.~Vigna, S.~Peterson,
  S.~More, T.~Pudlik, T.~Oshima, T.~J. Pingel, T.~P. Robitaille, T.~Spura,
  T.~R. Jones, T.~Cera, T.~Leslie, T.~Zito, T.~Krauss, U.~Upadhyay, Y.~O.
  Halchenko, and Y.~V{\'{a}}zquez-Baeza.
\newblock {SciPy 1.0: fundamental algorithms for scientific computing in
  Python}.
\newblock \emph{Nature Methods}, 17\penalty0 (3):\penalty0 261--272, 3 2020.

\bibitem[Xu et~al.(2020)Xu, Roosta-Khorasani, and
  Mahoney]{Xu2020Second-OrderStudy}
P.~Xu, F.~Roosta-Khorasani, and M.~W. Mahoney.
\newblock {Second-Order Optimization for Non-Convex Machine Learning: An
  Empirical Study}.
\newblock In \emph{Proceedings of the 2020 SIAM International Conference on
  Data Mining}, 2020.

\end{thebibliography}
\bibliographystyle{abbrvnat}


\newpage
\appendix
\setcounter{figure}{0}
\def\thefigure{\thesection.\arabic{figure}}
\def\thetable{\thesection.\arabic{table}}

\section{OPA algorithm}\label{sec:LBFGS}

\begin{algorithm}
	\SetAlgoRefName{LBFGS}
	\DontPrintSemicolon
	\caption{(Limited memory) BFGS method with OPA}
	\label{alg:LBFGS}
	\KwIn{ initial guess $(z_0, B_0^{-1})$, where $B_0^{-1}$ is symmetric and positive definite, tolerance $\epsilon>0$, frequency of additional updates $M\in\mathbb{N}$, memory limit $L\in\mathbb{N}\cup\{\infty\}$, $(t_n)$ a null sequence of positive numbers with $\sum_n t_n<\infty$	}
	Let $F := \nabla_z g_{\theta}$\\
	\For{ $n=0,1,2,\ldots$ }
	{
		\lIf{$\norm{F(z_n)}\leq\epsilon$ }{let $z^\star:=z_n$ and let $B:=B_n$; STOP}
		Let $\hat B_n^{-1}:=B_n^{-1}$\\
		\If{$(n\operatorname{mod} M)=0$}{let $e_n:=t_n B_n^{-1} \frac{\partial g_{\theta}}{\partial \theta}\Bigr|_{z_n}$,
			$\hat y_n:=F(z_n+e_n)-F(z_n)$ and $\hat r_n:=(e_n)^T \hat y_n$\\
		\If{$\hat r_n>0$}{let $\hat a_n:=e_n - B_n^{-1} \hat y_n$ and let
		\begin{equation*}
		    \hat B_n^{-1} := B_n^{-1} + \frac{\hat a_n (e_n)^T + e_n (\hat a_n)^T}{\hat r_n} -
		    \frac{(\hat a_n)^T \hat y_n}{(\hat r_n)^2} e_n (e_n)^T
		\end{equation*}
		}
	}
		Let $B_n^{-1}:=\hat B_n^{-1}$\\
	    \lIf{$n\geq L$}{remove update $n-L$ from $B_n^{-1}$}
		Let $p_n := -B_n^{-1} F(z_n)$\\
		Obtain $\alpha_n$ via line-search and let $s_n := \alpha_n p_n$\\
		Let $z_{n+1}:=z_n+s_n$, $y_n:=F(z_{n+1})-F(z_n)$ and $r_n:=(s_n)^T y_n$\\
		\uIf{$r_n>0$}{let $a_n:=s_n-B_n^{-1} y_n$ and let
		\begin{equation*}
		    B_{n+1}^{-1} := B_n^{-1} + \frac{a_n (s_n)^T + s_n (a_n)^T}{r_n} -
		    \frac{(a_n)^T y_n}{(r_n)^2} s_n (s_n)^T
		\end{equation*}
		}
		\lElse{let $B_{n+1}^{-1}:=B_n^{-1}$}
		\lIf{$n\geq L$}{remove update $n-L$ from $B_{n+1}^{-1}$}
	}
	\KwOut{$z^\star$, $B$}
\end{algorithm}

\begin{remark}
	A possible choice for $(t_n)$ is to use an arbitrary $t_0>0$ and $t_n:=\norm{s_{n-1}}$ for $n\geq 1$.
\end{remark}

\section{Proofs of SHINE convergence}\label{sec:convergenceproofs}
To facilitate reading, we restate the results before proving them.

\subsection{Convergence using ULI}
\label{proof:shine}
\shine*
\begin{proof}
Under Assumptions~\ref{ass:uli} and~\ref{ass:smooth-conv}, 
\citet[Theorem~5.7]{MoreTrangenstein}
shows that $B_n$ satisfies
\begin{equation*}
    \lim_{n \to \infty} B_n = J_{g_\theta}(z^\star)
\end{equation*}

The inversion operator is continuous in the space of invertible matrices, so we have:
\begin{equation*}
    \lim_{n \to \infty} B_n^{-1} = J_{g_\theta}(z^\star)^{-1}
\end{equation*}



Because $\nabla_z \mathcal{L}$ and $\frac{\partial g_{\theta}}{\partial \theta}$ are continuous at $z^\star$ by \autoref{ass:smooth-conv}~(iii), we also have thanks to \autoref{ass:smooth-conv}~(i):
\begin{equation*}
    \lim_{n \to \infty} \nabla_z \mathcal{L}(z_n) = \nabla_z \mathcal{L}(z^\star)
    \qquad\text{ and }\qquad
    \lim_{n \to \infty} \frac{\partial g_{\theta}}{\partial \theta}\Bigr|_{z_n}
    = \frac{\partial g_{\theta}}{\partial \theta}\Bigr|_{z^\star}
\end{equation*}

By continuity we then deduce that, as claimed,

\begin{equation*}
	\lim_{n \to \infty} p_\theta^{(n)}  =
    \lim_{n \to \infty} \nabla_z \mathcal{L}(z_n) B_n^{-1} \frac{\partial g_{\theta}}{\partial \theta}(z_n)
    = \nabla_z \mathcal{L}(z^\star) J_{g_\theta}(z^\star)^{-1} \frac{\partial g_{\theta}}{\partial \theta}\Bigr|_{z^\star}
    = \frac{\partial \mathcal{L}}{\partial \theta}\Bigr|_{z^\star} \qedhere
\end{equation*}
\end{proof}

\subsection{Convergence for BFGS with OPA}
\label{proof:opa}
\begin{assumption}[Extended Assumptions for BFGS]
	\label{ass:BFGS-ext}
	Let $g_\theta(z)=\nabla_z r_{\theta}(z)$ for some $C^2$ function $r_\theta:\mathbb{R}^d\rightarrow\mathbb{R}$.
	Consider \autoref{alg:qn} with $b=\texttt{false}$ and suppose that
	\begin{enumerate}
		\item the set $\Omega:=\{z\in\mathbb{R}^d: r_\theta(z)\leq r_\theta(z_0)\}$ is convex;
		\item $r_\theta$ is strongly convex in an open superset of $\Omega$ (this implies that $r_\theta$ has a unique global minimizer $z^\star$) and
		has a Lipschitz continuous Hessian near $z^\star$;
		\item there are positive constants $\eta_1,\eta_2$ such that the line search used in the algorithm ensures that for each $n\geq 0$ either
		\begin{equation*}
			r_\theta(z_{n+1}) \leq r_\theta(z_n) - \eta_1\left[\frac{\nabla r_\theta(z_n)^T p_n}{\norm{p_n}}\right]^2 \quad\text{ or } \quad
			r_\theta(z_{n+1}) \leq r_\theta(z_n) + \eta_2 \nabla r_\theta(z_n)^T p_n
		\end{equation*}
	is satisfied;
	\item the line search has the property that $\alpha_n=1$ will be used if both
	\begin{equation*}
		\frac{\norm{(B_n-J_{g_\theta}(z_n))s_n}}{\norm{s_n}}\qquad\text{ and }\qquad \norm{z_n-z^\star}
	\end{equation*}
are sufficiently small.
	\end{enumerate}
\end{assumption}

\begin{remark}
	The requirements 3. and 4. on the line search are, for instance, satisfied under the well-known Wolfe conditions, see \citet[section~3]{ByrdSchnabelShultz} for further comments.
\end{remark}

\opa*

\begin{proof}
	The proof is divided into four steps. The first step is to establish the q-superlinear convergence of $(z_n)$ to $z^\star$. Denoting by $N_e\subset\{0,M,2M,\ldots\}$ the set of indices of extra updates that are actually applied, the second step consists of showing
	\begin{equation}\label{eq_DMC}
		\lim_{N_e\ni n \to \infty} \left(B_n-J_{g_\theta}(z^\star)\right) \frac{e_n}{\norm{e_n}} = 0,
	\end{equation}
where, in this proof, $B_n$ always represents the matrix from \autoref{alg:LBFGS} \emph{before} the update in the direction $e_n$ is applied, i.e., the matrix whose inverse appears in the definition of $e_n$, while $\hat B_n$ always represents the matrix from \autoref{alg:LBFGS} \emph{after} the update in the direction $e_n$ has been applied; if the update in the direction $e_n$ is not applied, then $B_n=\hat B_n$.
The third step is to prove that \autoref{eq_DMC} implies the desired convergence \autoref{eq_SHINEpropertyforBFGS} of the SHINE direction if the limit $n\to\infty$ is replaced by $N_e\ni n\to\infty$, i.e., the limit is taken on the subsequence corresponding to $N_e$. The fourth step is then to transfer the convergence to the entire sequence.

It is easy to check that instead of updating $B_n^{-1}$, respectively, $\hat B_n^{-1}$, we can also obtain the sequences $(B_n)$ and $(\hat B_n)$ by updating according to
\begin{equation*}
    B_{n+1} = B_n + \frac{y_n y_n^T}{y_n^T s_n} - \frac{B_n s_n (B_n s_n)^T}{s_n^T B_n s_n}
\end{equation*}
for the usual update (skipping the update if $y_n^T s_n\leq 0$), respectively,
\begin{equation*}
	\hat B_n = B_n + \frac{\hat y_n \hat y_n^T}{\hat y_n^T e_n} - \frac{B_n e_n (B_n e_n)^T}{e_n^T B_n e_n}
\end{equation*}
for the extra update (skipping the update if $\hat y_n^T e_n\leq 0$). Here, the quantities $y_n$, $\hat y_n$ and $e_n$ are defined as in \autoref{alg:LBFGS}. We can now argue essentially as in the proof of \citet[Theorem~3.1]{ByrdSchnabelShultz} to show that
$(z_n)$ converges q-superlinearly to $z^\star$. As part of that proof we obtain that $\hat B_n\neq B_n$ for at least $\lceil 0.5 Q\rceil$ of the indices $n=0,M,2M,\ldots,QM$ for any $Q\in\mathbb{N}$ (namely for all $n\in N_e$ satisfying $n\leq QM$) and that we can apply \citet[Theorem~3.2]{ByrdNocedal}, which yields
\begin{equation}\label{eq_DMcond}
    \lim_{n\to\infty}\,\bigl(\hat B_n-J_{g_\theta}(z^\star)\bigr)\frac{s_n}{\norm{s_n}} = 0 \qquad\text{ and }\qquad
    \lim_{N_e\ni n \to \infty} \left(B_n-J_{g_\theta}(z^\star)\right) \frac{e_n}{\norm{e_n}} = 0.
\end{equation}
For the third step, we abbreviate $v_n:=\frac{\partial g_\theta}{\partial\theta}|_{z_n}$.
From the definition of $e_n$ and \autoref{eq_DMcond} we infer that
\begin{equation*}
	0 = \lim_{N_e\ni n\to\infty} \left(B_n-J_{g_\theta}(z^\star)\right) \frac{e_n}{\norm{e_n}}
	= \lim_{N_e\ni n\to\infty} \left(I-J_{g_\theta}(z^\star)B_n^{-1}\right) \frac{v_n}{\norm{B_n^{-1} v_n}}.
\end{equation*}
After multiplication with $J_{g_\theta}(z^\star)^{-1}$ this entails
\begin{equation*}
	\lim_{N_e\ni n\to\infty} \left(J_{g_\theta}(z^\star)^{-1} - B_n^{-1}\right) \frac{v_n}{\norm{B_n^{-1} v_n}} = 0,
\end{equation*}
which shows that
\begin{equation*}
    \lim_{N_e\ni n \to \infty} B_n^{-1} v_n
    = \lim_{N_e\ni n\to\infty} J_{g_\theta}(z^\star)^{-1} v_n = J_{g_\theta}(z^\star)^{-1} \frac{\partial g_\theta}{\partial\theta}|_{z^\star}
\end{equation*}
by \autoref{ass:smooth-conv}~(iii).
Using \autoref{ass:smooth-conv}~(iii) again it follows that
\begin{equation*}
	\lim_{N_e\ni n \to \infty} p_\theta^{(n)} = \lim_{N_e\ni n \to \infty} \nabla_z \mathcal{L}(z_n) B_n^{-1} \frac{\partial g_{\theta}}{\partial \theta}\Bigr|_{z_n}
	= \nabla_z \mathcal{L}(z^\star) J_{g_\theta}(z^\star)^{-1} \frac{\partial g_\theta}{\partial\theta}\Bigr|_{z^\star}
	= \frac{\partial \mathcal{L}}{\partial \theta}\Bigr|_{z^\star},
\end{equation*}
concluding the third step.
To infer that \autoref{eq_SHINEpropertyforBFGS} holds, it suffices to show that
$\lim_{N_e\ni n\to\infty}\norm{B_n- B_{j_n}}=0$ for any sequence $(j_n)_{n\in N_e}\subset\mathbb{N}$ such that 
$\{j_n,j_n+1,\ldots,n-1\}\cap N_e=\emptyset$ for all $n\in N_e$ sufficiently large. Indeed, since
for $C:=\max\{\sup_n \norm{B_n},
\sup_n \norm{B_n^{-1}}
\}$, which is finite by \citet[Theorem~3.2]{ByrdNocedal}, there holds
	\begin{equation*}
		(B_n) 
		\subset \Bigl\{ A\in\mathbb{R}^{d\times d}: \, A^{-1}\text{ exists }, \; \norm{A}\leq C, \; \norm{A^{-1}}\leq C\Bigr\}
	\end{equation*}
	and the set on the right-hand side of the inclusion is compact by the Banach lemma, inversion is a \emph{uniformly} continuous operation on this set, hence $\lim_{N_e\ni n \to \infty}\norm{B_n^{-1}-B_{j_n}^{-1}}=0$, so
	\begin{equation*}
		\lim_{N_e\ni n \to \infty} \, \norm{p_\theta^{(n)}-p_\theta^{(j_n)}} = 0
	\end{equation*}
	by continuity, and therefore
	\begin{equation*}
		\lim_{N_e\ni n \to \infty} p_\theta^{(j_n)}
		= \lim_{N_e\ni n \to \infty} p_\theta^{(n)}
		= \frac{\partial \mathcal{L}}{\partial \theta}\Bigr|_{z^\star}
	\end{equation*}
	by the third step, establishing the claim.

	It remains to show the validity of
	$\lim_{N_e\ni n\to\infty}\norm{B_n-B_{j_n}}=0$ for any sequence $(j_n)_{n\in N_e}$ such that
	$\{j_n,j_n+1,\ldots,n-1\}\cap N_e = \emptyset$ for all $n\in N_e$ sufficiently large.
	Since at least every second extra update is actually carried out, the condition on the intersection implies $n-j_n\leq 2M-1$ for all these $n$.
	Now let $(j_n)_{n\in N_e}$ be any such sequence. Then
    $B_n-B_{j_n}=\sum_{m=j_n}^{n-1} B_{m+1}-B_m$ is a sum of at most $2M-1$ BFGS updates in search directions, but contains no extra updates. Hence, the secant conditions $B_{n-l}s_{n-1-l}=y_{n-1-l}$, $l\in\{0,1,\ldots,n-j_n\}$, are satisfied, allowing us to deduce
\begin{equation*}
\begin{split}
    \norm{B_{n-l}-B_{n-l-1}}
    & = \frac{\norm{(B_{n-l}-B_{n-l-1})s_{n-l-1}}}{\norm{s_{n-l-1}}}\\
    & \leq \frac{\norm{y_{n-l-1}-J_{g_\theta}(z^\star)s_{n-l-1}}}{\norm{s_{n-l-1}}} +\frac{\norm{(B_{n-l-1}-J_{g_\theta}(z^\star))s_{n-l-1}}}{\norm{s_{n-l-1}}}
\end{split}
\end{equation*}
for all $l\in\{0,1,\ldots,n-j_n-1\}$. For each of these $l$, both terms on the right-hand side tend to zero for $N_e\ni n\to\infty$ (for the second term this follows from the first identity in \autoref{eq_DMcond} due to $B_{n-l-1}=\hat B_{n-l-1}$).
Recalling that $B_n-B_{j_n}=\sum_{m=j_n}^{n-1} B_{m+1}-B_m$ we find
$\lim_{N_e\ni n\to\infty}\norm{B_n-B_{j_n}} = 0$, which finishes the fourth step and thus concludes the proof.
\end{proof}

\subsection{Convergence for Adjoint Broyden with OPA}
\label{proof:adjbr}
\adjbroyden*

\begin{proof}
    Due to \autoref{ass:smooth-conv}, the superlinear convergence of $(z_n)$ follows from \citet[Theorem~2]{Schlenkrich2010OnMethods}.
	The proof of the remaining claim is divided into two cases.

	Case~1: Suppose that $\nabla_z \mathcal{L}(z^\star)=0$.
	By continuity this implies
	$\lim_{n \to \infty} \nabla_z \mathcal{L}(z_n)=0$.
	Since the sequence $(B_n^{-1}\frac{\partial g_{\theta}}{\partial \theta}|_{z_n})$ is bounded by \autoref{ass:bound}, it follows that
	\begin{equation*}
		\lim_{n \to \infty} p_\theta^{(n)} =
		\lim_{n \to \infty} \nabla_z \mathcal{L}(z_n) B_n^{-1} \frac{\partial g_{\theta}}{\partial \theta}\Bigr|_{z_n} = 0
		= \frac{\partial \mathcal{L}}{\partial \theta}\Bigr|_{z^\star},
	\end{equation*}
	as claimed.

	Case~2: Suppose that $\nabla_z \mathcal{L}(z^\star)\neq 0$.
	By continuity this implies $\nabla_z \mathcal{L}(z_n)\neq 0$ for all sufficiently large $n\in\mathbb{N}$.
	Let us denote by $N_e\subset\mathbb{N}$ the set of indices of extra updates.
	We stress that this set is infinite since, by construction, every $M$-th update is an extra update.
	We have $v_n\neq 0$ for all sufficiently large $n\in N_e$, hence \citet[Lemma~3]{Schlenkrich2010OnMethods} yields
	\begin{equation*}
		\lim_{N_e \ni n \to \infty} \frac{\norm{\nabla_z \mathcal{L}(z_n) (I-B_n^{-1} J_{g_\theta}(z^\star))}}{\norm{(\nabla_z \mathcal{L}(z_n) B_n^{-1})^T}}
		= \lim_{N_e\ni n \to \infty} \frac{\norm{(v_n)^T(B_n-J_{g_\theta}(z^\star))}}{\norm{v_n}} = 0.
	\end{equation*}
	This implies
	\begin{equation*}
		\lim_{N_e \ni n \to \infty} \frac{\norm{\nabla_z \mathcal{L}(z_n) (J_{g_\theta}(z^\star)^{-1}-B_n^{-1})}}{\norm{\nabla_z \mathcal{L}(z_n) B_n^{-1}}}
		= 0,
	\end{equation*}
	thus necessarily
	\begin{equation*}
		\lim_{N_e \ni n \to \infty} \norm{\nabla_z \mathcal{L}(z_n) (J_{g_\theta}(z^\star)^{-1}-B_n^{-1})} = 0.
	\end{equation*}
	Since $\lim_{N_e \ni n \to \infty} \nabla_z \mathcal{L}(z_n) J_{g_\theta}(z^\star)^{-1} = \nabla_z \mathcal{L}(z^\star) J_{g_\theta}(z^\star)^{-1}$ by continuity, we find
	\begin{equation*}
		\lim_{N_e \ni n \to \infty} \nabla_z \mathcal{L}(z_n) B_n^{-1} = \nabla_z \mathcal{L}(z^\star) J_{g_\theta}(z^\star)^{-1},
	\end{equation*}
	whence
	\begin{equation}\label{eq_claimforNe}
		\lim_{N_e \ni n \to \infty} p_\theta^{(n)}=\lim_{N_e \ni n \to \infty} \nabla_z \mathcal{L}(z_n) B_n^{-1}  \frac{\partial g_{\theta}}{\partial \theta}\Bigr|_{z_n} =
		\nabla_z \mathcal{L}(z^\star) J_{g_\theta}(z^\star)^{-1}  \frac{\partial g_{\theta}}{\partial \theta}\Bigr|_{z^\star}
		=  \frac{\partial \mathcal{L}}{\partial \theta}\Bigr|_{z^\star},
	\end{equation}
	where we have used continuity again.
	To prove that these limits hold not only for $N_e\ni n\to\infty$ but in fact for all $\mathbb{N}\ni n\to\infty$,	we establish, as intermediate claim,  that for any fixed $m\in\mathbb{N}$ we have $\lim_{n \to \infty}\norm{B_{n+m}-B_n}=0$. Note that this claim is equivalent to
	$\lim_{n \to \infty}\norm{B_{n+1}-B_n}=0$.
	Denoting by $L\geq 0$ the Lipschitz constant of $J_{g_\theta}$ near $z^\star$, we find
	\begin{equation*}
		\begin{split}
			\norm{B_{n+1}-B_n} = \frac{\norm{v_n v_n^T \left[J_{g_\theta}(z_{n+1})-B_n\right]}}{\norm{v_n}^2}
			&\leq \norm{J_{g_\theta}(z_{n+1})-J_{g_\theta}(z^\star)}+\frac{\norm{\left[J_{g_\theta}(z^\star)-B_n\right]^T v_n}}{\norm{v_n}}\\
			&\leq L\norm{z_{n+1}-z^\star}+\frac{\norm{E_n^T v_n}}{\norm{v_n}}.
		\end{split}
	\end{equation*}
	Both terms on the right-hand side go to zero as $n$ goes to infinity: the first one due to $\lim_{n \to \infty} z_n=z^\star$ and the second one since
	$\lim_{n \to \infty} \frac{\norm{E_n^T v_n}}{\norm{v_n}}=0$ by \citet[Lemma~3]{Schlenkrich2010OnMethods}.
	This shows that ${\lim_{n \to \infty}\norm{B_{n+1}-B_n} = 0}$, which concludes the proof of the intermediate claim.

	From $\lim_{n \to \infty}\norm{B_{n+m}-B_n}=0$ for any fixed $m\in\mathbb{N}$ it follows that for any sequence $(j_n)\subset\mathbb{N}$ with $\sup_n \, \lvert j_n - n\rvert<\infty$ there holds
	$\lim_{n \to \infty}\norm{B_{j_n}-B_n}=0$.
	This implies for any such sequence $(j_n)$ the limit $\lim_{n \to \infty}\norm{B_{j_n}^{-1}-B_n^{-1}}=0$. To establish this, note that for $C:=\max\{\sup_n \norm{B_n},\sup_n \norm{B_n^{-1}}\}$, which is finite by \autoref{ass:bound} and the combination of the bounded deterioration principle \citep[Lemma~2]{Schlenkrich2010OnMethods} with \autoref{ass:smooth-conv}~(i), the set
	\begin{equation*}
		\Bigl\{ A\in\mathbb{R}^{d\times d}: \, A^{-1}\text{ exists }, \; \norm{A}\leq C, \; \norm{A^{-1}}\leq C\Bigr\}
	\end{equation*}
	includes the sequence $(B_n)$ and is compact by the Banach lemma, so inversion is a \emph{uniformly} continuous operation on this set.

	Now let us construct a sequence $(j_n)\subset N_e$ by defining, for every $n\in\mathbb{N}$, $j_n := \argmin_{m\in N_e} \lvert n-m\rvert$. That is, for every $n$, $j_n$ denotes the member of $N_e$ with the smallest distance to $n$.
	It is clear that $\lvert n-j_n\rvert\leq M-1$ for all $n$, hence $\lim_{n \to \infty}\norm{B_{j_n}^{-1}-B_n^{-1}}=0$. Using this and, again, continuity it is easy to see that
	\begin{equation*}
		\lim_{n \to \infty} \, \norm{p_\theta^{(n)}-p_\theta^{(j_n)}} = 0,
	\end{equation*}
	which implies by \autoref{eq_claimforNe} that
	\begin{equation*}
		\lim_{n \to \infty} p_\theta^{(n)} =
		\lim_{n \to \infty} p_\theta^{(j_n)} =
		\lim_{N_e\ni n \to \infty} p_\theta^{(n)}
		= \frac{\partial \mathcal{L}}{\partial \theta}\Bigr|_{z^\star},
	\end{equation*}
	thereby establishing the claim.
\end{proof}

\begin{remark}
	An inspection of the proof reveals that if $B_n$ is never updated in the direction $z_n$, but only updated in the direction $v_n$ defined in \autoref{eq:adj-br-add-update}, then \autoref{ass:bound} can be replaced by the significantly weaker assumption that the sequence $(B_n^{-1}\frac{\partial g_{\theta}}{\partial \theta}|_{z_n})$ is bounded. The price to pay is that the convergence rate of $(z_n)$ to $z^\star$ will be slower (q-linear instead of q-superlinear) since the updates in the direction $z_n$ are critical for ensuring fast convergence of $(z_n)$ to $z^\star$.
\end{remark}

\section{Logistic Regression Hyperparameters}
\label{sec:log-reg-det}
For both datasets we split the data randomly (with a different seed for each run) between training-validation-test, with the following proportions: 90\%-5\%-5\%.
The hyperparameters are the same as in the original HOAG work~\citep{Pedregosa2016HyperparameterGradient}, except:
\begin{itemize}
    \item We use a memory limitation of 30 updates (not grid-searched) for accelerated methods (Jacobian-Free and SHINE), compared to 10 for the original method.
    This is because the approximation should be better using more updates.
    We verified that using 30 updates for the original method does not improve the convergence speed.
    That number is 60 for OPA.
    \item We use a smaller exponential decrease of 0.78 (not grid-searched) for the accelerated methods, compared to 0.99 for the original method.
    This is because in the very long run, the approximation can cause oscillations.
\end{itemize}

We also use the same setting as \citet{Pedregosa2016HyperparameterGradient} for the Grid and Random Search.
Finally, we highlight that warm restart is used for both the inner problem and the Hessian inversion in the direction of the gradient.

\paragraph{OPA inversion experiments}
For the OPA experiments, we used a memory limitation of 60, and a tolerance of $10^{-6}$.
The OPA update is done every 5 regular updates.

\section{DEQ training details}
\label{sec:deq-det}
The training details are the same as the original Multiscale DEQ paper~\citep{Bai2020MultiscaleModels}: all the hyperparameters are kept the same and not fine-tuned, and the data split is the same.
We recall here some important aspects.
For both datasets, the network is first trained in an unrolled weight-tied fashion for a few epochs in order to stabilize the training.

We also underline that the DEQ models, in addition to having a fixed-point-defining sub-network, also have a classification and a projection head.

\new{Finally, for \autoref{fig:deq_pareto}, the median backward pass is computed with 100 samples on a single V100 GPU for a batch size of 32.}

\subsection{CIFAR}
The Adam optimizer~\citep{Kingma2015Adam:Optimization} is used with a $10^{-3}$ start learning rate, and a cosine annealing schedule.

\subsection{ImageNet}
The Stochastic Gradient Descent optimizer is used with a $5 \times 10^{-2}$ start learning rate, and a cosine annealing schedule.

The images are downsampled 2 times before being fed to the fixed-point defining sub-network.

\section{Additional results}
\label{sec:extra-res}
\subsection{Bi-level optimization extended}
\label{sec:bi-lvl-ext}
In order to make sure that SHINE was indeed improving over HOAG~\citep{Pedregosa2016HyperparameterGradient}, we also looked at the results obtained when performing an inversion with a precision lower than that prescribed by \citet{Pedregosa2016HyperparameterGradient} originally (i.e. truncating the iterative inversion).
These results, also complemented with Random Search~\citep{Bergstra2012RandomBengio}, can be seen in \autoref{fig:log-reg-ext}.
They confirm that the advantage provided by SHINE cannot be retrieved with a looser tolerance on the inversion.

\begin{figure}[t]
\centering
\includegraphics[width=\textwidth]{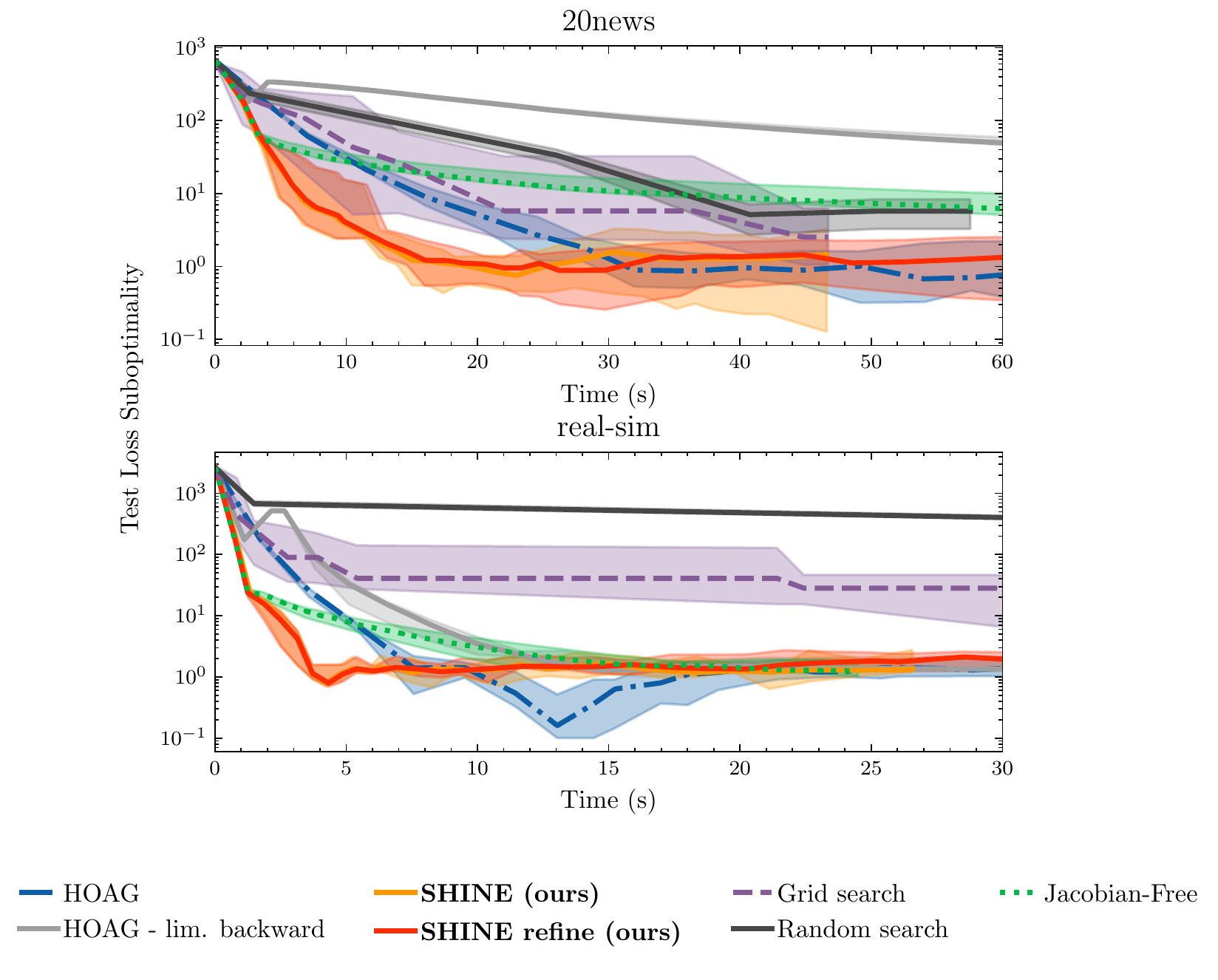}
\caption{\textbf{Bi-level optimization: }Convergence of different hyperparameter optimization methods on the $\ell_2$-regularized logistic regression problem for two datasets (20news~\citep{Lang1995NewsWeeder:Netnews} and real-sim~\citep{libsvm}) on held-out test data.    \label{fig:log-reg-ext}}
\end{figure}

\subsection{Regularized Nonlinear Least Squares}
\label{sec:nls}
\review{In order to further validate the efficiency of SHINE compared to competing methods, we also benchmarked it on the regularized nonlinear least squares task. For a training set $(x_{train, i}, y_{train, i})_{i=1}^N$ and a test set $(x_{test, i}, y_{test, i})_{i=1}^M$, this problem reads
\begin{equation}
\begin{split}
    \min_\theta & \frac12\sum_{i=1}^M\|y_{test, i} - \sigma((z^*)^\top x_{test, i})\|_2^2\\
    z^* = \argmin_z & \frac12 \sum_{j=1}^N\|y_{train, j} - \sigma(z^\top x_{train, j})\|_2^2 + \frac\theta2\|z\|_2^2
\end{split}
\end{equation}
where $\sigma$ denotes the sigmoid function $\sigma(x) = \frac1{1 + e^{-x}}$. For a fixed hyper-parameter $\theta$, this task is typically solved using L-BFGS \citep{Xu2020Second-OrderStudy,Berahas2021Quasi-NewtonSample}.}

\begin{figure}[t]
\centering
\includegraphics[width=\textwidth]{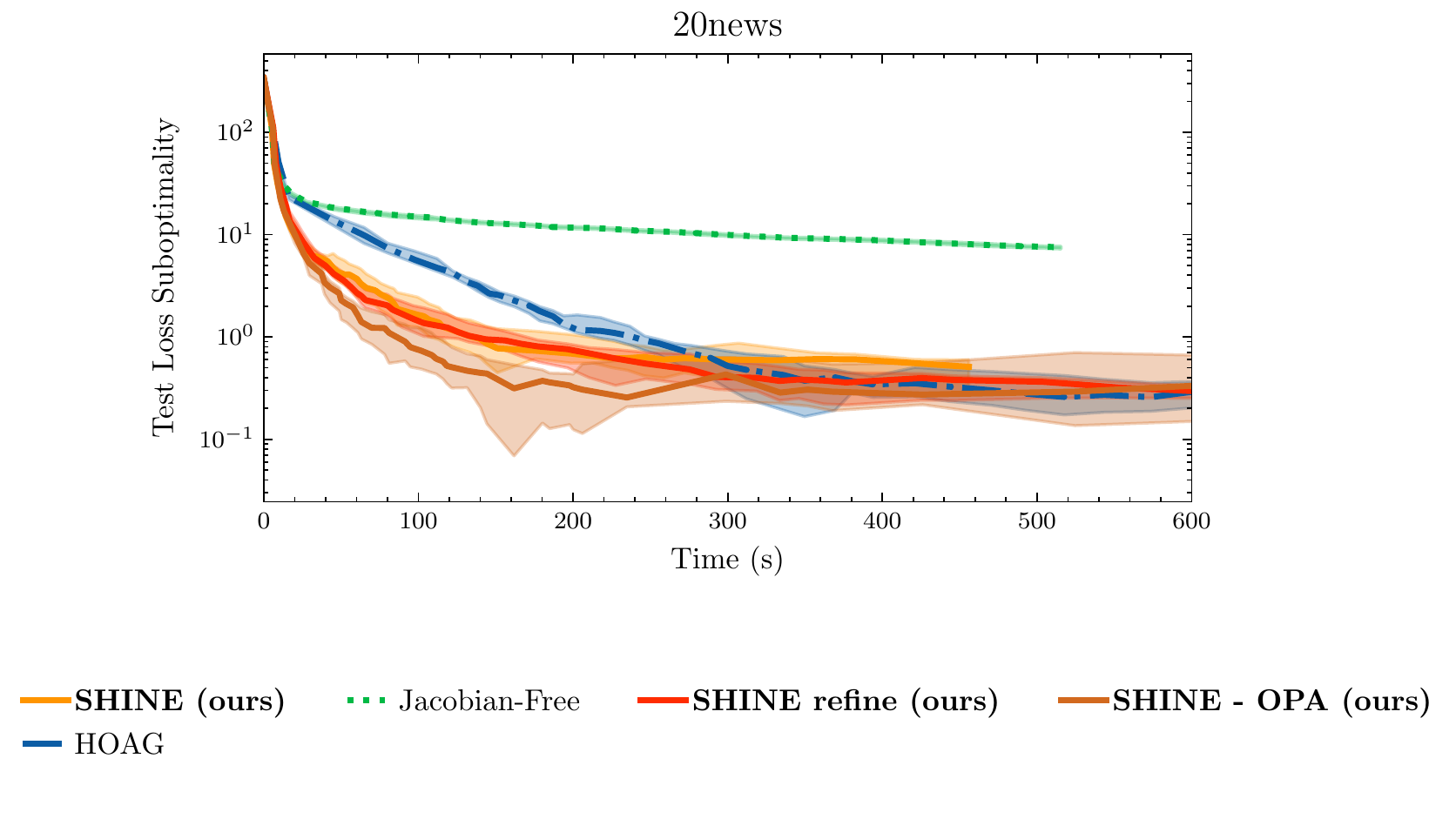}
\caption{\textbf{Bi-level optimization on regularized nonlinear least squares: }Convergence of different hyperparameter optimization methods on the $\ell_2$-regularized nonlinear least squares for the 20news~\citep{Lang1995NewsWeeder:Netnews} dataset on held-out test data.    \label{fig:nls}}
\end{figure}

\review{We can see in \autoref{fig:nls} that SHINE clearly outperforms the Jacobian-Free method and it is also quicker to converge compared to HOAG. We can also notice the benefit of OPA compared to the vanilla SHINE method is more pronounced.
We hypothesize that this is due to the nonconvex nature of the inner problem making the Hessian inverse approximation more difficult, as was noted by \citet{Berahas2021Quasi-NewtonSample}.
}

\subsection{Contractivity assumption}
\label{sec:contract}
One of the main limiting assumptions in the original Jacobian-Free method work~\citep{Fung2021FixedBackprop}, is the contractivity assumption.
We showed here that it was not important to enforce this in order to achieve excellent results, but one can wonder whether this assumption is not met in practice thanks to the unrolled pretraining of DEQs.
We looked at the contractivity of the fixed-point defining sub-network empirically by using the power-method applied to a nonlinear function, in the CIFAR setting.
The results, summarized in \autoref{tab:contract}, show that the fixed-point defining sub-network is not contractive at all.

\begin{table}[]
\centering
\caption{\label{tab:contract} Nonlinear spectral radius obtained by the power method for the fixed-point defining sub-network for the 3 different methods.}
\begin{tabular}{|l|l|}
\hline
\textbf{Method} & \textbf{Nonlinear spectral radius} \\ \hline
Original        & 230.5                               \\ \hline
Jacobian-Free   & 193.7                               \\ \hline
SHINE           & 234.2                               \\ \hline
\end{tabular}%
\end{table}

\subsection{Time gains}
\label{sec:time-gains}
Because the total training time is not only driven by backward pass but also by the forward pass and the evaluation, we show for completeness in \autoref{tab:times} the time gains for the different acceleration methods for the overall epoch.
We do not report in this table the time taken for pre-training which is equivalent across all methods, and is not something on which SHINE has an impact.
It is clear in \autoref{tab:times} that accelerated methods can have a significant impact on the training of DEQs because we see that half the time of the total pass is spent on the backward pass (more on ImageNet~\citep{Deng2009ImageNet:Database}).
We also notice that while SHINE has a slightly slower backward pass than the Jacobian-Free method~\citep{Fung2021FixedBackprop}, the difference is negligible when compared to the total pass computational cost.

\begin{table}[]
\centering
\caption{The time required for each method on the different datasets during the equilibrium training.
For the forward and backward passes, the time is measured offline, for a single batch of 32 samples, with a single GPU, using the median to avoid outliers.
This time is given in milliseconds.
For the epochs, the time is measured by taking an average of the 6 first epochs, and given in hours-minutes for Imagenet and minutes-seconds for CIFAR.
The epoch time for SHINE without improvement on Imagenet is not given because it never reaches the 26 forward steps: the implicit depth is too short.
Fallback is not used for CIFAR.
Numbers in parenthesis indicate the number of inversion steps for the refined versions.}
\resizebox{\textwidth}{!}{%
\begin{tabular}{|l|c|c|c|c|c|c|}
\hline
\textbf{\textbf{Dataset Name}} & \multicolumn{3}{c|}{\textbf{CIFAR~\citep{Krizhevsky2009LearningImages}}} & \multicolumn{3}{c|}{\textbf{ImageNet~\citep{Deng2009ImageNet:Database}}} \\ \hline
\textbf{Method Name}           & \textbf{Forward}     & \textbf{Backward}     & \textbf{Epoch}    & \textbf{Forward}     & \textbf{Backward}     & \textbf{Epoch}    \\ \hline
\textbf{Original~\citep{Bai2020MultiscaleModels}}    & 256 & 210  & 4min40 & 644 & 798  & 3h38 \\ \hline
\textbf{Jacobian-Free~\citep{Fung2021FixedBackprop}} & 249 & 12.9 & 3min10 & 621 & 13.5 & 2h02 \\ \hline
\textbf{SHINE Fallback (ours)}                      & 218 & 16.0 & 3min20 & 622 & 35.3 & 2h13 \\ \hline
\textbf{SHINE Fallback refine (5, ours)}            & 272 & 96.6 & 3min50 & 622 & 212  & 2h44 \\ \hline
\textbf{Jacobian-Free refine (5)}                   & 260 & 86.5 & 3min40 & 620 & 186  & 2h43 \\ \hline
\textbf{Original limited backprop}                  & 281 & 86.4 & 3min50 & 653 & 187  & 2h40 \\ \hline
\end{tabular}%
}
\label{tab:times}
\end{table}

\subsection{DEQ OPA results}
\label{sec:opa-deq}
We can clearly see in \autoref{fig:cifar-opa-inv} that in the case of DEQs, OPA also significantly improves the inversion over the other accelerated methods.
We also see that the improvements of SHINE over the Jacobian-Free method without OPA are marginal.
\begin{figure}
    \centering
    \includegraphics{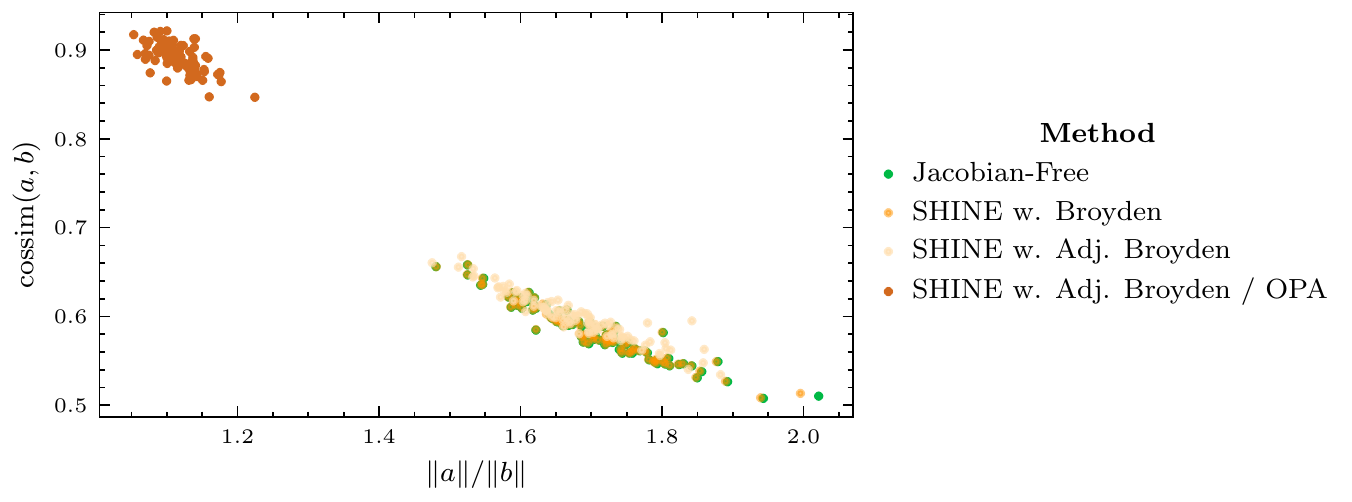}
    \caption{\textbf{Quality of the inversion using OPA in DEQs : } Ratio of the inverse approximation over the exact inverse function of the cosine similarity between the inverse approximation $b = \nabla_z \mathcal{L} (z^\star) B_n^{-1}$ and the exact inverse $a = \nabla_z \mathcal{L} (z^\star) J_{g_\theta}(z^\star)^{-1}$ for different methods. For OPA, the extra update frequency is 5. 100 runs were performed with different batches.}
    \label{fig:cifar-opa-inv}
\end{figure}

Because the inversion is so good, we would expect that the performance of SHINE with OPA would be on par with the original method's.
However, this is not what we see in the results presented in \autoref{tab:opa-deq-res}.
Indeed, OPA does improve on SHINE with only Adjoint Broyden, but it does not outperform SHINE done with Broyden.

\begin{table}[]
\centering
\caption{\textbf{CIFAR DEQ OPA results : } Top-1 accuracy of different methods on the CIFAR dataset, and epoch mean time.}
\label{tab:opa-deq-res}
\resizebox{\textwidth}{!}{%
\begin{tabular}{|l|l|l|}
\hline
\textbf{Methode name}             & \textbf{Top-1 Accuracy (\%)} & \textbf{Epoch mean time} \\ \hline
\textbf{Original}                 & 93.51                        & 4min40                   \\ \hline
\textbf{Jacobian-Free}            & 93.09                        & 3min10                   \\ \hline
\textbf{SHINE (Broyden)}          & 93.14                        & 3min20                   \\ \hline
\textbf{SHINE (Adj. Broyden)}     & 92.89                        & 4min                   \\ \hline
\textbf{SHINE (Adj. Broyden/OPA)} & 93.04                        & 4min40                     \\ \hline
\end{tabular}%
}
\end{table}


\end{document}